\def\BibTeX{{\rm B\kern-.05em{\sc i\kern-.025em b}\kern-.08em
    T\kern-.1667em\lower.7ex\hbox{E}\kern-.125emX}}
\theoremstyle{plain} 
\newtheorem{lemma}{\textbf{Lemma}} 
\newtheorem{theorem}{\textbf{Theorem}}\setcounter{theorem}{0}
\newtheorem{assumption}{\textbf{Assumption}}
\theoremstyle{definition}
\theoremstyle{remark}
\newtheorem{remark}{\textbf{Remark}}
\newcommand{\xstar}{x^*}
\newcommand{\ystar}{y^*}
\newcommand{\RR}{\mathbb{R}}
\newcommand{\cc}{\mathfrak{c}}
\newcommand{\qfrak}{\mathfrak{q}}
\newcommand{\pfrak}{\mathfrak{p}}
\newcommand{\afrak}{\mathfrak{a}}
\newcommand{\FF}{\mathcal{F}}
\newcommand{\EE}{\mathbb{E}}
\title{$O(1/k)$ Finite-Time Bound for Non-Linear Two-Time-Scale Stochastic Approximation}
\author{Siddharth Chandak\\\\Department of Electrical Engineering, Stanford University, Stanford, USA\\\\\texttt{chandaks@stanford.edu}}
\date{April 2025}
\begin{document}
\maketitle
\begin{abstract}
Two-time-scale stochastic approximation (SA) is an algorithm with coupled iterations which has found broad applications in reinforcement learning, optimization and game control. In this work, we derive mean squared error bounds for non-linear two-time-scale iterations with contractive mappings. In the setting where both stepsizes are order $\Theta(1/k)$, commonly referred to as single time-scale SA with multiple coupled sequences, we obtain the first $\mathcal{O}(1/k)$ rate without imposing additional smoothness assumptions. In the setting with true time-scale separation, the previous best bound was $\mathcal{O}(1/k^{2/3})$. We improve this to $\mathcal{O}(1/k^a)$ for any $a<1$ approaching the optimal $\mathcal{O}(1/k)$ rate. The key step in our analysis involves rewriting the original iteration in terms of an averaged noise sequence whose variance decays sufficiently fast. Additionally, we use an induction-based approach to show that the iterates are bounded in expectation. Our results apply to Polyak averaging, as well as to algorithms from reinforcement learning, and optimization, including gradient descent-ascent and two-time-scale Lagrangian optimization.
\end{abstract}

\medskip
\noindent \textbf{Keywords:} stochastic approximation, two-time-scale stochastic approximation, finite-time bound, optimization, reinforcement learning.

\section{Introduction}
\label{sec:introduction}
Stochastic Approximation (SA) is a popular class of iterative algorithms used in finding the fixed point of an operator given its noisy realizations \cite{Robbins-Monro}. These algorithms have been widely studied in the past few decades due to their applications in fields such as reinforcement learning, optimization, communication networks, and stochastic control \cite{Borkar-book}.

Two-time-scale SA is a variant of SA where the fixed points for two coupled operators are found using two iterations with different rates of update. These coupled iterations have been used in various fields including reinforcement learning (RL) and optimization. For example, SSP Q-learning \cite{Abounadi} is a two-time-scale algorithm for control of average reward MDPs. Actor-critic algorithms are another example where the `actor' and the `critic' update at different rates \cite{konda1999actor}. Gradient descent-ascent algorithm for learning saddle points  \cite{minimax} is an example of two-time-scale SA in optimization. Game control algorithms can often be modeled using two time-scales, as the players update their actions on a faster time-scale and the manager updates the game parameters on a slower time-scale \cite{Chandak-game}.

Consider the following coupled iterations.
\begin{equation*}
    \begin{aligned}
&x_{k+1}=x_k+\alpha_k (f(x_k,y_k)-x_k+M_{k+1})\\
&y_{k+1}=y_k+\beta_k (g(x_k,y_k)-y_k+M'_{k+1}).
\end{aligned}
\end{equation*}
Here $x_k\in\RR^{d_1}$ and $y_k\in\RR^{d_2}$ are the iterates updated on the faster and slower time-scale, respectively. These time-scales are dictated by the stepsizes $\alpha_k$ and $\beta_k$, respectively, where $\beta_k$ decays faster than $\alpha_k$. Here, $f(\cdot)$ and $g(\cdot)$ are the non-linear Lipschitz functions whose fixed points we aim to obtain. $M_{k+1}$ and $M'_{k+1}$ denote additive and multiplicative martingale noise.

The objective of the algorithm is to find solutions to $f(\xstar,\ystar)=\xstar$ and $g(\xstar,\ystar)=\ystar$. We assume that operators $f(x,y)$ and $g(\xstar(y),y)$ are contractive with respect to $x$ and $y$, respectively. Here, $\xstar(y)$ is the fixed point for $f(\cdot,y)$. This assumption is sufficient to ensure uniqueness of $(\xstar,\ystar)$ and asymptotic convergence of the iterates to the fixed points. Such an assumption is standard in the literature on finite-time analysis of SA algorithms and is satisfied by a wide range of RL and optimization algorithms \cite{Q-learn,TD0,Nesterov}.

While classical analysis of SA schemes has focused on asymptotic analysis \cite{Kushner,Borkar-book}, there has been a recent interest in obtaining finite-time guarantees on performance of these algorithms. These finite-time guarantees can also be broadly divided into two types: high probability or concentration bounds (e.g., \cite{Chandak_conc, Chen-conc}) and mean square error bounds (e.g.,\cite{Zaiwei,Srikant,Doan}). We focus on the latter in this paper.

\subsection{Main Contributions}\label{subsec:contributions}
Our aim in this paper is to obtain mean square error bounds for non-linear two-time-scale SA, i.e., obtain bounds on  $\EE\left[\|x_k-\xstar\|^2+\|y_k-\ystar\|^2\right]$ where $\|\cdot\|$ denotes the Euclidean norm. 
The existing results on such bounds can be broadly classified into two categories. We next describe these categories, and how we improve on the existing results.
\begin{enumerate}
    \item \textbf{Single time-scale SA with multiple coupled sequences:} This is the setting where $\beta_k$ and $\alpha_k$ are both of the order $\Theta(1/k)$. In this setting, the time-scale separation is decided by $\lim_{k\uparrow\infty} \beta_k/\alpha_k=\gamma<1$. A bound of $\mathcal{O}(1/k)$ was obtained in \cite{Shen-Chen-smooth} but they assumed smoothness of the operator $\xstar(y)$. We obtain the first $\mathcal{O}(1/k)$ bound in this setting without any additional assumptions (Theorem \ref{thm:main-1/k}).
    
    \item \textbf{`True' two-time-scale SA:} This is the setting where $\lim_{k\uparrow\infty} \beta_k/\alpha_k=0$, i.e., the two-time-scales are truly separated. The previous best known bound in this setting was $\mathcal{O}(1/k^{2/3})$ obtained for stepsizes of the form $\beta_k=1/k$ and $\alpha_k=1/k^{2/3}$ \cite{Doan}. We significantly improve this bound to $\mathcal{O}(1/k^\afrak)$ obtained when stepsizes are of the form $\alpha_k=\mathcal{O}(1/k^\afrak)$ and $\beta_k=\mathcal{O}(1/k)$. Here $\afrak$ can be arbitrarily close to $1$ (Theorem \ref{thm:main-1/k^a}). An advantage of this setting is that the stepsize sequences can be chosen independently of system parameters. This allows for a more robust bound, while still achieving a bound of $\mathcal{O}(1/k^\afrak)$ where $\afrak$ can be arbitrarily close to one.
\end{enumerate}

We identify that the reason for a bound of $\mathcal{O}(1/k^{2/3})$ in \cite{Doan} was the manner in which $M'_{k+1}$, the martingale noise in the slower time-scale iteration, was handled. In fact, in \cite{Chandak-TTS-CDC}, a bound of $\mathcal{O}(1/k)$ is obtained for a special case where the slower time-scale is noiseless. In this work, we introduce an averaged noise sequence and auxiliary iterates for the slower time-scale. The key intuition behind introducing these is to transform the original iteration into one in which the noise sequence has a decaying variance, instead of the constant variance as in the original iteration. We emphasize that analyzing this averaged noise sequence is just a proof technique. Neither does our algorithm have an additional averaging step, nor do we give bounds on averaged iterates.

Concretely, we introduce the averaged noise sequence $U_{k+1}=\beta_kM'_{k+1}+(1-\beta_k)U_k$ (with $U_0=0$) and define the iterates $z_k=y_k-U_k$. We analyze the iterations for $x_k$ and $z_k$, obtaining a bound of $\mathcal{O}(1/k)$ on $\EE\left[\|z_k-\ystar\|^2\right]$. Moreover, we show that $\EE\left[\|y_k-z_k\|^2\right]$ decays at a rate of $\mathcal{O}(1/k)$. This allows us to obtain a rate of $\mathcal{O}(1/k)$ on the mean square error. 

 \subsection{Outline and Notation}
This paper is structured as follows: Section \ref{sec:formulation} sets up the problem and gives examples of algorithms that fit into our framework. Section \ref{sec:main} presents the main results of this paper after defining the stepsize sequences. Section \ref{sec:outline} contains a proof sketch for our main results. Section \ref{sec:conc} concludes the paper and presents some future directions.

Throughout this work, $\|\cdot\|$ denotes the Euclidean norm, and $\langle x_1,x_2\rangle$ denotes the inner product given by $x_1^Tx_2$.

 \subsection{Related Works}
Finite-time bounds for two-time-scale SA have primarily focused on the linear case, i.e., where the functions $f(\cdot)$ and $g(\cdot)$ are linear. In \cite{Konda}, an asymptotic rate of $\mathcal{O}(1/k)$ is obtained. This has been extended to a finite-time bound in \cite{Shaan, Kaledin}. While our results can also be applied to the linear case to get a bound of $\mathcal{O}(1/k)$, these prior works exploit the structure of linearity to obtain bounds which hold for a larger set of stepsizes. For example, in \cite{Shaan}, it is shown that a rate of $\mathcal{O}(1/k)$ can be obtained on $\EE[\|y_k-\ystar\|^2]$ for the stepsize sequences $\alpha_k=\mathcal{O}(1/k^\afrak)$ and $\beta_k=\mathcal{O}(1/k)$, where $\afrak\in(0.5,1)$. In \cite{Dalal}, high probability guarantees on the performance of linear two-time-scale SA have been obtained.

As discussed in Subsection \ref{subsec:contributions}, the existing works on  non-linear two-time-scale SA can be divided into two categories. We improve on the existing results in both categories.

\textbf{`True' time-scale separation:}\quad This is the setting where $\lim_{k\uparrow\infty}\beta_k/\alpha_k=0$. Among these works, the closest to us in terms of generality of assumptions is \cite{Doan}. They obtained a bound of $\mathcal{O}(1/k^{2/3})$ by choosing $\alpha_k=\alpha/(k+2)^{2/3}$ and $\beta_k=\beta/(k+2)$. We significantly improve their result to obtain a bound of $\mathcal{O}(1/k^\afrak)$ where $\afrak$ can be arbitrarily close to $1$. Moreover, we consider multiplicative martingale difference noise instead of the additive noise that they consider. This allows us to consider cases where the noise scales affinely with the iterates. This enables us to incorporate linear SA with noisy matrices. We employ an induction-based approach to deal with the multiplicative martingale noise and show that the iterates are bounded in expectation. The analysis in \cite{Doan} was extended in \cite{Chandak-TTS-CDC}, where a bound of $\mathcal{O}(1/k^{2/3})$ was obtained in the setting where the operators are contractive under arbitrary norms. Unlike above works, a rate of $\mathcal{O}(1/k)$ is obtained in \cite{Han-linearity} but they assume local linearity of the operators. This allows them to borrow techniques from the analysis of linear SA.

\textbf{SA with multiple coupled sequences:} These works obtain the rate $\mathcal{O}(1/k)$ using stepsizes of the form $\Theta(1/k)$ in both time-scales. Among these, \cite{Doan-1/k} obtains a rate of $\mathcal{O}(1/k)$ by modifying the algorithm to include an additional averaging step. In \cite{Chandak-TTS-CDC}, a bound of $\mathcal{O}(1/k)$ is shown for the special case where the slower time-scale is noiseless. In \cite{Shen-Chen-smooth}, an additional assumption is taken that states that $\xstar(y)$ is differentiable and smooth. In this setting, the work closest to us in terms of assumptions on the operator and the fixed point, is \cite{1/k-const-step}. This work also obtains a bound of $\mathcal{O}(1/K)$ in the general non-linear two-time-scale setting, but they work with a fixed horizon $K$, constant stepsizes which depend on the horizon, and provide a bound that holds only for $k=K$.


\section{Problem Formulation}\label{sec:formulation}
We set up the notation and assumptions in this section. Consider the following coupled iterations.
\begin{equation}\label{iter-main}
    \begin{aligned}
&x_{k+1}=x_k+\alpha_k (f(x_k,y_k)-x_k+M_{k+1})\\
&y_{k+1}=y_k+\beta_k (g(x_k,y_k)-y_k+M'_{k+1}).
\end{aligned}
\end{equation}
Here $x_k\in\RR^{d_1}$ is the iterate updating on the faster time-scale and $y_k\in\RR^{d_2}$ is the iterate updating on the slower time-scale. Their rates of update are dictated by the stepsizes $\alpha_k$ and $\beta_k$, respectively. We discuss these in more detail in the next section. $M_{k+1}$ and $M'_{k+1}$ are martingale difference noise sequences (Assumption \ref{assu:Martingale}). Our first assumption is the key contractive assumption on function $f(\cdot,\cdot)$.

\begin{assumption}\label{assu:f-contrac}
The function $f(x,y):\RR^{d_1}\times \RR^{d_2}\mapsto \RR^{d_1}$is $\lambda$-contractive in $x$ for any $y\in\RR^{d_2}$, i.e., 
    $$\|f(x_1,y)-f(x_2,y)\|\leq \lambda\|x_1-x_2\|,$$
for all $x_1,x_2\in\RR^{d_1}$ and $y\in\RR^{d_2}$. Here $\lambda<1$ is the contraction factor.
\end{assumption}

Using the Banach contraction mapping theorem, the above assumption implies that $f(\cdot,y)$ has a unique fixed point for each $y$. We denote this fixed point by $\xstar(y)$, i.e., for each $y$, there exists unique $\xstar(y)$ such that $f(\xstar(y),y)=\xstar(y)$. We next state the contractive assumption on function $g(\xstar(\cdot),\cdot)$.
\begin{assumption}\label{assu:g-contract}
The function $g(\xstar(\cdot),\cdot):\RR^{d_2}\mapsto\RR^{d_2}$ is $\mu$-contractive, i.e., 
    $$\|g(\xstar(y_1),y_1)-g(\xstar(y_2),y_2)\|\leq \mu\|y_1-y_2\|,$$
for $y_1,y_2\in\RR^{d_2}$. Here $\mu$ is the contraction factor.
\end{assumption}
This assumption implies the existence of a unique fixed point for $g(\xstar(\cdot),\cdot)$, i.e., there exists unique $\ystar$ such that $g(\xstar(\ystar),\ystar)=\ystar$. We also define $\xstar\coloneqq \xstar(\ystar)$. Our goal is to study the convergence rate of $x_k$ to $\xstar$ and of $y_k$ to $\ystar$. The following remark comments on an alternate formulation.

\begin{remark}\label{remark-alternate}
    We formulate our problem as finding the fixed point for two contractive operators. Alternatively, this can be formulated as a root finding problem for functions $\tilde{f}(x,y)$ and $\tilde{g}(x,y)$. The necessary assumptions in this case are that functions $\tilde{f}(\cdot,y)$ and $\tilde{g}(\xstar(\cdot),\cdot)$ are strongly monotone. The formulations are equivalent by defining $f(x,y)=x-\zeta\tilde{f}(x,y)$ and $g(x,y)=y-\zeta\tilde{g}(x,y)$ for appropriate $\zeta$ (see \cite[Lemma D.1]{Chandak-nonexp} for an explicit conversion). While the same results can be obtained in both frameworks with our averaged noise proof technique, we choose to work with the fixed-point formulation as it is easier to illustrate our proof technique in this formulation. In particular, the intuition behind our auxiliary iterates is much easier to understand in the fixed point formulation (see Lemma 2 and the discussion below).
\end{remark}

The following two assumptions are standard in analysis of SA and are satisfied by the applications we state next. The first assumption states that functions $f(\cdot)$ and $g(\cdot)$ are Lipschitz.
\begin{assumption}\label{assu:Lipschitz}
Functions $f(\cdot,\cdot)$ and $g(\cdot,\cdot)$ are Lipschitz, i.e., 
    \begin{align*}
    &\|f(x_1,y_1)-f(x_2,y_2)\|+\|g(x_1,y_1)-g(x_2,y_2)\|\\
    &\leq L(\|x_1-x_2\|+\|y_1-y_2\|).
\end{align*}
for all $x_1,x_2\in\RR^{d_1}$ and $y_1,y_2\in\RR^{d_2}$. Here $L>0$ is the Lipschitz constant. 
\end{assumption}
Finally, we make the assumption that the noise sequence are martingale difference sequences, and their second moment scales affinely with the squared norm of iterates.
\begin{assumption}\label{assu:Martingale}
Define the family of $\sigma$-fields $\FF_k=\sigma(x_0,y_0,M_i,M'_i, i\leq k)$. Then $\{M_{k+1}\}$ and $\{M'_{k+1}\}$ are martingale difference sequences with respect to $\FF_k$, i.e., $\EE[M_{k+1}\mid\FF_k]=\EE[M'_{k+1}\mid\FF_k]=0.$
Moreover, for all $k\geq 0$,
    $$\EE[\|M_{k+1}\|^2+\|M'_{k+1}\|^2\mid\FF_k]\leq \cc_1 (1+\|x_k\|^2+\|y_k\|^2),$$
for some $\cc_1\geq 0$.
\end{assumption}

\subsection{Applications}\label{sec:applications}
Our framework is fairly general and can incorporate various optimization and RL algorithms. We briefly discuss a few of these settings here.

\subsubsection{SA with Polyak Averaging} This is a two-time-scale algorithm where the slower time-scale is just averaging:
\begin{subequations}
    \begin{align}
                &x_{k+1}=x_k+\alpha_k(F(x_k)-x_k+M_{k+1})\label{Polyak-faster}\\
        &y_{k+1}=y_k+\beta_k(x_k-y_k)=(1-\beta_k)y_k+\beta_kx_k\label{Polyak-slower}.
    \end{align}
\end{subequations}
The aim here is to find the fixed point $\xstar$ for a contractive operator $F(\cdot)$. In presence of Markov noise, $F(x_k)$ is replaced with $F(x_k,W_k)$ in the iteration, and the contractive nature is required for $\overline{F}(\cdot)$, the stationary average of $F(\cdot,w)$. The additional averaging step \eqref{Polyak-slower} is used to improve the statistical efficiency and the rate of convergence \cite{polyak1992acceleration}. This algorithm is compatible with our framework with $\mu=0$. We can obtain a rate of $\mathcal{O}(1/k)$ on $\EE\left[\|y_k-\xstar\|^2\right]$ by choosing $\beta_k=2/(k+1)$. 

Due to the simple structure of the slower time-scale, the  assumption that needs to be carefully verified here is that the \textit{actual} SA iteration \eqref{Polyak-faster} has a mapping contractive under the Euclidean norm. In optimization, iterations of the form $x_{k+1}=x_k+\alpha_k(\tilde{F}(x_k)+M_{k+1})$, where the function $\tilde{F}$ is strongly monotone, are commonly used. Examples include $\tilde{F}(\cdot)=\nabla H(\cdot)$ for a strongly convex and smooth $H(\cdot)$, and $\tilde{F}(\cdot)=\nabla H(\cdot)+\lambda R(\cdot)$ where $H(\cdot)$ is convex and $R(\cdot)$ is a strongly convex regularizer (such as $\ell_2$ regularizer). Similar iterations are also used in RL. Examples include TD(0) with linear approximation for policy evaluation \cite{TD0}, and regularized Q-Learning with linear function approximation \cite{RegQ}. These iterations can then be rewritten in the form of \eqref{Polyak-faster} where the function $F(\cdot)$ is contractive under the Euclidean norm (see Remark \ref{remark-alternate}).

\subsubsection{Stochastic Gradient Descent Ascent Algorithm} Another application of our framework is  minimax optimization for strongly convex-strongly concave functions, i.e., obtaining $\min_y\max_x F(x,y)$ where $F$ is strongly convex in $x$ and strongly concave in $y$. Then consider the following two-time-scale stochastic gradient descent ascent algorithm \cite{minimax}.
\begin{equation*}
    \begin{aligned}
        &x_{k+1}=x_k+\alpha_k(\nabla_xF(x_k,y_k)+M_{k+1})\\
        &y_{k+1}=y_k+\beta_k(-\nabla_yF(x_k,y_k)+M'_{k+1}).
    \end{aligned}
\end{equation*}
The gradient descent (resp.\ ascent) operator is strongly monotone for smooth and strongly convex (resp.\ strongly concave) functions \cite[Theorem 2.1.9]{Nesterov}, and hence the above iterations fit into our framework. This implies that $(x_k,y_k)$ converge to the unique saddle point $(\xstar,\ystar)$ at a rate of $\mathcal{O}(1/k)$. Problems of minimax optimization or saddle point computation frequently arise in two-player zero-sum and matrix games \cite{saddle-games, saddle-games2}.

\subsubsection{Constrained Optimization} Consider  the constrained optimization problem where we wish to maximize $H(x)$ subject to $Ax=b$, and the following two-time-scale Lagrangian optimization method \cite{two-time-lagrangian}.
\begin{equation*}
    \begin{aligned}
        &x_{k+1}=x_k+\alpha_k(\nabla H(x_k)-A^Ty_k+M_{k+1})\\
        &y_{k+1}=y_k+\beta_k(Ax_k-b).
    \end{aligned}
\end{equation*}
Here $y_k$ denotes the Lagrange multiplier and the noise sequence $M_{k+1}$ denotes the noise arising from taking gradient samples at $x_k$. This algorithm is particularly useful in distributed settings, where each node updates some part of the variable $x$ locally, and there is a global linear constraint. It is also applicable in Generalized Nash Equilibrium Problems (GNEP) for linear coupled constraints \cite{GNEP}. Under the assumption that $H$ is strongly concave and smooth, and that the matrix $A$ has full row rank, the above iteration satisfies our assumptions \cite[Lemma 5.6 a)]{Chandak-nonexp}, and the iterates converge at a rate of $\mathcal{O}(1/k)$ to the constrained maxima.
 
\subsubsection{Linear Two-Time-Scale SA} This is a special case of our non-linear framework and hence our results hold true in this setting as well. Consider the problem of finding solutions for the following set of linear equations $$A_{11}x+A_{12}y=b_1\;\;\text{and}\;\;A_{21}x+A_{22}y=b_2.$$
Define the matrix $\Delta=A_{22}-A_{21}A_{11}^{-1}A_{12}$. Then under the assumption that $-A_{11}$ and $-\Delta$ are Hurwitz, i.e., the real part of all their eigenvalues is negative, their exist unique solutions $(\xstar,\ystar)$ to the above linear equations. Moreover, the following two-time-scale iteration fits into our framework of contractive mappings in both time-scales \cite{Shaan}.
\begin{equation*}
    \begin{aligned}
        &x_{k+1}=x_k+\alpha_k(b_1-A_{11}x_k-A_{12}y_k+M_{k+1})\\
        &y_{k+1}=y_k+\beta_k(b_2-A_{21}x_k-A_{22}y_k+M'_{k+1}).
    \end{aligned}
\end{equation*}
The noise sequences here arise due to noisy estimates of the matrix at each time $k$. Iterations of these form have found wide applications in RL, and specifically in off-policy learning where data collected under a behavior policy is used for evaluating a target policy different from the behavior policy. Examples of such algorithms include Temporal Difference with Gradient Correction (TDC) and Gradient Temporal Difference Learning (GTD2) \cite{GTD}.

\section{Main Results}\label{sec:main}
In this section, we present the main results of our paper. The mean square bound depends on the choice of the stepsize sequence. We fix $\beta_k=\mathcal{O}(1/k)$ as the optimal bound for our problem is obtained in this case. Our first result deals with stepsize sequence of the form $\alpha_k=\mathcal{O}(1/k)$. A mean square bound of $\mathcal{O}(1/k)$ is obtained in this case. The second result assumes stepsize sequences of the form $\alpha_k=\mathcal{O}(1/k^{\afrak})$ where $\afrak\in(0.5,1)$. In this case, we obtain a bound of $\mathcal{O}(1/k^\afrak)$.
\subsection{$\alpha_k=\mathcal{O}(1/k)$}
Consider the following assumption on the stepsize sequences.
\begin{assumption}\label{assu:stepsize-1/k}
    $\beta_k$ and $\alpha_k$ are of the form
    $$\beta_k=\frac{\beta}{k+K_1}\;\text{and}\;\alpha_k=\frac{\alpha}{k+K_1},$$
    where $\beta\geq2/(1-\mu)$, $\beta/\alpha\leq C_1$, and $K_1\geq C_2$.
\end{assumption}
The values for constants $C_1$ and $C_2$ are given in Appendix \ref{app:proof-main-1/k}. These constants depends on the system parameters and $C_2$ additionally depends on the choice of $\alpha$ and $\beta$. We wish to make a few remarks about the above assumption. The assumptions that $\beta\geq2/(1-\mu)$ and $\beta/\alpha\leq C_1$ are necessary for our analysis. Since $\beta_k/\alpha_k=\beta/\alpha$ for all $k$, we need $\beta/\alpha\leq C_1$ to specify the separation between the two time-scales. The assumption that $K_1\geq C_2$ is taken to ensure that $\beta_k$ and $\alpha_k$ are sufficiently small for all $k$. This assumption is not necessary, and in absence of this assumption our result will hold for all $k$ greater than some $k_0$. 

Here is our main result.
\begin{theorem}\label{thm:main-1/k}
    Suppose Assumptions \ref{assu:f-contrac}-\ref{assu:Martingale} are satisfied and the stepsize sequences satisfy Assumption \ref{assu:stepsize-1/k}. Then there exist constants $C_3,C_4>0$ such that for all $k\geq 0$,
    $$\EE\left[\|x_k-\xstar(y_k)\|^2+\|y_k-\ystar\|^2\right]\leq \frac{C_3}{k+K_1},$$
    and 
    $$\EE\left[\|x_k-\xstar\|^2\right]\leq \frac{C_4}{k+K_1}.$$
\end{theorem}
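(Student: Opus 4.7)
The plan is to follow the averaged-noise transformation hinted at in Section~\ref{subsec:contributions}. I would first define
\[
U_{k+1} = (1-\beta_k)U_k + \beta_k M'_{k+1}, \qquad U_0 = 0, \qquad z_k = y_k - U_k,
\]
and compute that the $M'_{k+1}$ terms cancel exactly, yielding the noise-free recursion $z_{k+1} = z_k + \beta_k(g(x_k,y_k)-z_k)$. The advantage is that $U_k$ is an exponentially averaged martingale whose variance decays at rate $\beta_k$, so the effective slow-timescale iteration in $z_k$ only carries an ``extra'' term controlled by $\|U_k\|$. A short preliminary lemma would show that $\xstar(\cdot)$ is Lipschitz with constant $L/(1-\lambda)$, which follows from combining Assumptions~\ref{assu:f-contrac} and~\ref{assu:Lipschitz} in the standard way.

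Before analyzing the recursions, I would establish that $\EE[\|x_k\|^2+\|y_k\|^2]$ is uniformly bounded in $k$. Because Assumption~\ref{assu:Martingale} allows the noise second moments to scale with $\|x_k\|^2+\|y_k\|^2$, this requires an induction: assuming a uniform bound $B$ up to step $k$, the one-step update together with the contractivity assumptions shows that $\EE[\|x_{k+1}\|^2+\|y_{k+1}\|^2]\le B$ provided $K_1$ is large enough (this choice fixes $C_2$). With boundedness in hand, the recursion for $U_k$ together with the martingale property gives
\[
\EE[\|U_{k+1}\|^2] \le (1-\beta_k)^2\,\EE[\|U_k\|^2] + \beta_k^2\,\cc_1(1+B),
\]
and since $\beta_k = \beta/(k+K_1)$ with $\beta\ge 2/(1-\mu)>1$, a standard comparison lemma delivers $\EE[\|U_k\|^2]=\mathcal{O}(1/(k+K_1))$.

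The core of the proof is the coupled analysis of $V_k := \EE[\|z_k-\ystar\|^2]$ and $W_k := \EE[\|x_k-\xstar(y_k)\|^2]$. Expanding $z_{k+1}-\ystar = (1-\beta_k)(z_k-\ystar) + \beta_k(g(x_k,y_k)-\ystar)$, I would split
\[
g(x_k,y_k) - \ystar = \bigl[g(x_k,y_k)-g(\xstar(y_k),y_k)\bigr] + \bigl[g(\xstar(y_k),y_k)-g(\xstar(\ystar),\ystar)\bigr],
\]
bound the first bracket by $L\|x_k-\xstar(y_k)\|$ using Assumption~\ref{assu:Lipschitz} and the second by $\mu\|y_k-\ystar\|\le \mu(\|z_k-\ystar\|+\|U_k\|)$ using Assumption~\ref{assu:g-contract}, then square and apply Young's inequality to obtain, for some constants $c_V,C>0$,
\[
V_{k+1} \le (1 - c_V\beta_k)V_k + C\beta_k W_k + C\beta_k\,\EE[\|U_k\|^2] + C\beta_k^2.
\]
For the fast timescale I would write $x_{k+1}-\xstar(y_{k+1}) = [x_{k+1}-\xstar(y_k)] + [\xstar(y_k)-\xstar(y_{k+1})]$, bound the first piece by contractivity of $f(\cdot,y_k)$ and the second by $L/(1-\lambda)$ times $\|y_{k+1}-y_k\|$, absorbing the cross term via Young with weight $\alpha_k$ to produce
\[
W_{k+1} \le (1-c_W\alpha_k)W_k + C\tfrac{\beta_k^2}{\alpha_k}(1+B) + C\alpha_k^2.
\]
The main obstacle is tuning the Young's-inequality constants so that the effective contraction rates satisfy $c_V\beta>1$ and $c_W\alpha>1$ after the coupling losses are absorbed; this is precisely what the conditions $\beta\ge 2/(1-\mu)$ and $\beta/\alpha\le C_1$ are designed to guarantee. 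A joint induction on $V_k$ and $W_k$ via the comparison lemma then delivers $V_k,W_k=\mathcal{O}(1/(k+K_1))$, after which $\|y_k-\ystar\|^2\le 2\|z_k-\ystar\|^2+2\|U_k\|^2$ yields the first claim and $\|x_k-\xstar\|^2\le 2W_k+2(L/(1-\lambda))^2\|y_k-\ystar\|^2$ yields the second.
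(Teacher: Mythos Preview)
There is a genuine gap in your fast-timescale analysis, and it is precisely the obstacle the averaged-noise device is meant to remove. You introduce $z_k=y_k-U_k$ but then center the fast error at $\xstar(y_k)$ rather than at $\xstar(z_k)$. When you write $x_{k+1}-\xstar(y_{k+1})=[x_{k+1}-\xstar(y_k)]+[\xstar(y_k)-\xstar(y_{k+1})]$, the second bracket is controlled by $L_0\|y_{k+1}-y_k\|=L_0\beta_k\|g(x_k,y_k)-y_k+M'_{k+1}\|$, and the $M'_{k+1}$ piece does \emph{not} vanish in expectation because $\xstar(\cdot)$ is nonlinear (this is exactly the step where the smoothness assumption of \cite{Shen-Chen-smooth} would save you, but the theorem forbids it). After Young with weight $\alpha_k$, the noise contributes a driving term of order $\beta_k^2/\alpha_k$ to your $W$-recursion; under Assumption~\ref{assu:stepsize-1/k} one has $\beta_k^2/\alpha_k=(\beta/\alpha)^2\alpha_k$, so your displayed inequality reads
\[
W_{k+1}\le (1-c_W\alpha_k)W_k+C'\alpha_k(1+B)+C\alpha_k^2,
\]
which solves to $W_k=\mathcal{O}(1)$, not $\mathcal{O}(1/(k+K_1))$. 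Feeding this into your $V$-recursion via the $C\beta_kW_k$ term then gives only $V_k=\mathcal{O}(1)$ as well.

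The paper's fix is to track $\EE[\|x_k-\xstar(z_k)\|^2]$ instead. Because $z_{k+1}-z_k=\beta_k(g(x_k,y_k)-z_k)$ is \emph{noise-free}, the center drift $\xstar(z_k)-\xstar(z_{k+1})$ involves only $\|x_k-\xstar(z_k)\|$, $\|z_k-\ystar\|$, and $\|U_k\|$, all of which decay; the resulting driving term is $\mathcal{O}(\alpha_k^2)$ (Lemma~\ref{lemma:recursive}), and summing the two recursions yields the $1/k$ rate (Lemma~\ref{lemma:recursive-S}). A secondary issue: your plan to prove uniform boundedness of $\EE[\|x_k\|^2+\|y_k\|^2]$ by a standalone one-step induction ``before analyzing the recursions'' will not close, because the one-step $y$-update only contracts via $g(\xstar(\cdot),\cdot)$ and thus re-introduces $\|x_k-\xstar(y_k)\|$. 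The paper instead runs the induction jointly: the error bound at step $k$ (assuming boundedness up to $k-1$) is what delivers boundedness at step $k$ (Lemma~\ref{lemma:final-1/k}).
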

Explicit values for constants $C_1,\ldots,C_4$ have been presented along with the theorem's proof in Appendix \ref{app:proof-main-1/k}. An outline of the proof has been given in Section \ref{sec:outline} through a series of lemmas. The following remark discusses whether the result can be achieved by formulating the problem as a single-time-scale SA problem with a joint variable.

\begin{remark}
    Suppose $\gamma=\beta/\alpha$. Then the two iterations in this case can be written as (assuming a noiseless scenario):
    \begin{align*}
        x_{k+1}&=x_k+\alpha_k(f(x_k,y_k)-x_k)\\
        y_{k+1}&=y_k+\alpha_k(\gamma g(x_k,y_k)-\gamma y_k).
    \end{align*}
    Then, by creating a stacked joint variable $\mathbf{W}_k=\bigl[\begin{smallmatrix} x_k \\ y_k \end{smallmatrix}\bigr]$, and by defining the function $H(\mathbf{W}_k)=\bigl[\begin{smallmatrix} f(x_k,y_k)-x_k \\ \gamma g(x_k,y_k)-\gamma y_k \end{smallmatrix}\bigr]$, we can write the above iteration as:
    \begin{align*}
   \mathbf{W}_{k+1}=\mathbf{W}_k+\alpha_kH(\mathbf{W}_k).
    \end{align*}
    To replicate our results using this joint-variable formulation, we would require the function $H(\cdot)$ to be strongly monotone, i.e., the function $H(\cdot)$ should satisfy $\langle H(\mathbf{W})-H(\mathbf{W'}), \mathbf{W}-\mathbf{W'}\rangle\geq c\|\mathbf{W}-\mathbf{W'}\|^2,$ for some $c>0$. This is true in the linear case \cite[Remark 7.3]{Bullo_linear_joint}, but the proof there uses the properties of Hurwitz matrices and cannot be extended to our case. The key difficulty in our setting is controlling the term $\|g(x,y)-g(x',y')\|$ without introducing additional terms of the form $\|x-\xstar(y)\|$, which arise because contractivity of $g$ is available only for $g(\xstar(y),y)$. These extra terms cannot be bounded in a way that yields strong monotonicity of $H(\cdot)$ in general, making this joint-iterate approach inconclusive.

    Nevertheless, certain modifications of our assumptions make the analysis feasible. One such modification involves replacing Assumption \ref{assu:g-contract} with the assumption that the function $g(x,y)$ is $\mu$-contractive in $y$ for all $x$. Then, under mild assumptions on the Lipschitz constant, the mapping $H(\cdot)$ can be shown to be strongly monotone. In this regime, the above update rule can indeed be analyzed to obtain results analogous to ours. This modified formulation includes, as a special case, the stochastic gradient descent–ascent method for strongly convex–strongly concave problems discussed earlier.
\end{remark}

\subsection{$\alpha_k=\mathcal{O}(1/k^\afrak)$ for $\afrak\in(0.5,1)$}
Consider the following assumption on the stepsize sequences.
\begin{assumption}\label{assu:stepsize-1/k^a}
    $\beta_k$ and $\alpha_k$ are of the form
    $$\beta_k=\frac{\beta}{k+K_2}\;\text{and}\;\alpha_k=\frac{\alpha}{(k+K_2)^\afrak},$$
    where $\alpha>0, \afrak\in(0.5,1), \beta\geq2/(1-\mu)$, and $K_2\geq D_1$.
\end{assumption}
The value for constant $D_1$ has been given in Appendix \ref{app:proof-main-1/k^a}. We do not need the assumption $\beta/\alpha\leq C_1$ for such stepsize sequences. Instead, the time-scale separation is specified by $\lim_{k\uparrow\infty}\beta_k/\alpha_k\rightarrow 0$. We take the assumption $K_2\geq D_1$ to ensure that the bound holds for all time $k$. In absence of this assumption, the bound will still hold but only for $k$ greater than some $k_0$. Based on this observation, we want to emphasize that choosing stepsize sequences based on system parameters is not required for the following bound to hold. 

\begin{theorem}\label{thm:main-1/k^a}
        Suppose Assumptions \ref{assu:f-contrac}-\ref{assu:Martingale} are satisfied and the stepsize sequences satisfy Assumption \ref{assu:stepsize-1/k^a}. Then there exist constants $D_2,D_3>0$ such that for all $k\geq 0$,
    $$\EE\left[\|x_k-\xstar(y_k)\|^2+\|y_k-\ystar\|^2\right]\leq \frac{D_2}{(k+K_2)^{\afrak}},$$
    and 
    $$\EE\left[\|x_k-\xstar\|^2\right]\leq \frac{D_3}{(k+K_2)^{\afrak}}.$$
\end{theorem}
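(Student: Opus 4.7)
The plan is to extend the averaged-noise technique sketched for Theorem~\ref{thm:main-1/k} to the true two-time-scale regime $\alpha_k=\Theta(1/k^\afrak)$, $\afrak\in(0.5,1)$, $\beta_k=\Theta(1/k)$. I first introduce the averaged noise sequence $U_{k+1}=\beta_k M'_{k+1}+(1-\beta_k)U_k$ with $U_0=0$, and the auxiliary iterates $z_k:=y_k-U_k$. A direct substitution yields the conditionally deterministic recursion $z_{k+1}=z_k+\beta_k(g(x_k,z_k+U_k)-z_k)$, in which the slow-time-scale noise $M'_{k+1}$---whose second moment is only bounded by a constant under Assumption~\ref{assu:Martingale}---is replaced by $U_k$, whose second moment I will show decays at rate $\mathcal{O}(\beta_k)$. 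All subsequent analysis is carried out for the pair $(x_k,z_k)$, and the stated bound on $y_k$ is recovered at the end via $\|y_k-\ystar\|^2\leq 2\|z_k-\ystar\|^2+2\|U_k\|^2$.

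The first technical block establishes boundedness in expectation, $\sup_k\EE[\|x_k\|^2+\|y_k\|^2]<\infty$. Because the martingale noise has variance scaling affinely with $\|x_k\|^2+\|y_k\|^2$, I cannot simply pull $\EE[\|M'_{k+1}\|^2\mid\FF_k]$ out as a constant. Instead, I proceed by induction on $k$: expand the squared-norm recursions, absorb the growth contribution using the smallness of $\alpha_k,\beta_k$ (which is the role of $K_2\geq D_1$ in Assumption~\ref{assu:stepsize-1/k^a}), and pick the inductive bound large enough at $k=0$. Given boundedness, the recursion $\EE[\|U_{k+1}\|^2\mid\FF_k]\leq(1-\beta_k)^2\|U_k\|^2+\beta_k^2\EE[\|M'_{k+1}\|^2\mid\FF_k]$ immediately yields $\EE[\|U_k\|^2]=\mathcal{O}(\beta_k)=\mathcal{O}(1/k)$ via a standard discrete-time convergence lemma.

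The core argument derives two coupled recursions for $\tilde e_k:=x_k-\xstar(z_k)$ and $\eta_k:=z_k-\ystar$, which are then closed by a simultaneous induction on $k$. For the fast time-scale, using contractivity of $f$ in its first argument, Lipschitzness of $\xstar(\cdot)$, and the fact that the drift $\tilde\delta_k:=\xstar(z_k)-\xstar(z_{k+1})$ is $\FF_k$-measurable (since $z_{k+1}-z_k=\beta_k(g(x_k,y_k)-z_k)$ contains no martingale increment), an application of Young's inequality produces
\[
\EE[\|\tilde e_{k+1}\|^2]\leq\Big(1-c_1\alpha_k+C\tfrac{\beta_k^2}{\alpha_k}\Big)\EE[\|\tilde e_k\|^2]+C\tfrac{\beta_k^2}{\alpha_k}\EE[\|\eta_k\|^2]+C_1\alpha_k\EE[\|U_k\|^2]+C_2\alpha_k^2.
\]
The key algebraic step is the bound $\|\tilde\delta_k\|\leq L'\beta_k\|g(x_k,y_k)-z_k\|\leq C'\beta_k(\|\tilde e_k\|+\|\eta_k\|+\|U_k\|)$, which uses $g(\xstar,\ystar)=\ystar$ to absorb all constants into the three error terms. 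For the slow time-scale, noting that $\eta_{k+1}$ is $\FF_k$-measurable, contractivity of $g(\xstar(\cdot),\cdot)$ gives
\[
\EE[\|\eta_{k+1}\|^2]\leq(1-c_3\beta_k)\EE[\|\eta_k\|^2]+D_1\beta_k\EE[\|\tilde e_k\|^2]+D_2\beta_k\EE[\|U_k\|^2].
\]
A joint induction then closes: assuming $\EE[\|\tilde e_k\|^2]\leq A/k^\afrak$ and $\EE[\|\eta_k\|^2]\leq B/k^\afrak$ for $k\leq n$, the driving residuals turn out to be $\mathcal{O}(1/k^{2\afrak})$ in the first recursion and $\mathcal{O}(1/k^{1+\afrak})$ in the second; both bounds propagate to $k=n+1$ provided $A,B$ are chosen sufficiently large and $\beta(1-\mu)>\afrak$, which is guaranteed by $\beta\geq 2/(1-\mu)$. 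The theorem's two bounds then follow by triangle inequality and the Lipschitz continuity of $\xstar(\cdot)$.

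The main obstacle is the $\tfrac{1}{\alpha_k}\EE[\|\tilde\delta_k\|^2]$ term in the tracking-error recursion---exactly where earlier analyses stalled at $\afrak\leq 2/3$. Working directly with $\delta_k:=\xstar(y_k)-\xstar(y_{k+1})$ inherits a contribution $L'\beta_k\|M'_{k+1}\|$ of constant conditional variance, so $\tfrac{1}{\alpha_k}\EE[\|\delta_k\|^2]=\mathcal{O}(\beta_k^2/\alpha_k)=\mathcal{O}(1/k^{2-\afrak})$ independently of how close the iterates are to the fixed point; for this residual to be absorbable by the contraction $(1-c_1\alpha_k)$ one needs $2-\afrak\geq 2\afrak$, i.e., $\afrak\leq 2/3$. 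Replacing $\delta_k$ by $\tilde\delta_k$, which is $\FF_k$-measurable and bounded by the errors themselves rather than by a constant, instead gives $\tfrac{1}{\alpha_k}\EE[\|\tilde\delta_k\|^2]=\mathcal{O}(\beta_k^2/\alpha_k)\cdot(\EE[\|\tilde e_k\|^2]+\EE[\|\eta_k\|^2]+\EE[\|U_k\|^2])$, which under the induction hypothesis is $\mathcal{O}(1/k^2)=o(\alpha_k^2)$ for every $\afrak<1$ and is therefore harmlessly absorbed. Ensuring that the boundedness induction, the $U_k$ bound, and the joint induction on the two error recursions close consistently---with constants that remain uniform in $\afrak$, in particular the stepsize threshold $D_1$---is the bulk of the careful bookkeeping.
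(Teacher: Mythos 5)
Your proposal follows essentially the same route as the paper: the averaged noise $U_k$, the auxiliary iterates $z_k=y_k-U_k$, the observation that the drift $\xstar(z_k)-\xstar(z_{k+1})$ is $\FF_k$-measurable and bounded by the error quantities themselves, the two coupled recursions for $x_k-\xstar(z_k)$ and $z_k-\ystar$, and the induction that simultaneously maintains boundedness of the iterates. The only cosmetic difference is that the paper closes the coupled system by forming the weighted Lyapunov function $T_m$ with weight $\tfrac{18L^2}{\lambda'\mu'}\tfrac{\beta_m}{\alpha_m}$ on the fast error, solving that single recursion for $\EE[\|z_m-\ystar\|^2]=\mathcal{O}(\alpha_m)$, and then back-substituting into the fast recursion, whereas you run a direct two-parameter induction with the ansatz $A/k^{\afrak}$, $B/k^{\afrak}$; these are equivalent in content and your rate bookkeeping (in particular the diagnosis of why the $\FF_k$-measurable drift removes the $\afrak\le 2/3$ barrier) is correct.

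One structural caveat: you present boundedness of $\EE[\|x_k\|^2+\|y_k\|^2]$ as a standalone first block obtained by ``expanding the squared-norm recursions'' and invoking smallness of the stepsizes. If this means a direct induction on the raw quantities $\EE[\|x_k\|^2]$ and $\EE[\|y_k\|^2]$, it does not close: the slow recursion has no intrinsic negative drift in $\|y_k\|$ because only $g(\xstar(\cdot),\cdot)$, not $g(x,\cdot)$, is contractive, so the expansion produces coupling terms of order $\beta_k L L_0\|y_k\|^2$ and $\alpha_k L\|x_k\|\|y_k\|$ with non-summable coefficients that cannot be absorbed merely by taking $K_2$ large. The boundedness induction must instead run through the error quantities $\|x_k-\xstar(z_k)\|^2$ and $\|z_k-\ystar\|^2$ (which do have negative drift), and it is necessarily interleaved with the error-rate induction: the bound $\EE[\|U_m\|^2]\le 2\cc_1\Gamma\beta_m$ and the noise bounds at step $k$ require boundedness up to $k-1$, while boundedness at step $k$ is recovered from the error bound at step $k$ via $\|x_k\|\le\|x_k-\xstar(y_k)\|+\|\xstar\|+L_0\|y_k-\ystar\|$. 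This is exactly how the paper organizes Lemma \ref{lemma:final-1/k^a} and the strong induction in Appendix \ref{app:proof-main-1/k^a}; your machinery supports the same fix, but as written the ordering of the blocks is not executable.
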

Explicit values for constants $D_1,D_2$ and $D_3$ have been presented along with the theorem's proof in Appendix \ref{app:proof-main-1/k^a}. An outline of the proof has been given in Section \ref{sec:outline} through a series of lemmas.

\section{Proof Outline}\label{sec:outline}
In this section, we present an outline of the proofs for Theorems 1 and 2 through a series of lemmas. The proofs for the two theorems differ only in the final few lemmas which are presented in Subsections \ref{subsec:outline-1/k} and \ref{subsec:outline-1/k^a}, respectively. The proofs for the lemmas in this section have been presented in Appendix \ref{app:proof-lemmas} and these lemmas are then used to complete the proofs for Theorem \ref{thm:main-1/k} and Theorem \ref{thm:main-1/k^a} in Appendix \ref{app:proof-main}.

We first present a lemma which states that $\xstar(y_k)$, the `target' points for the faster iteration, move slowly as the iterates $y_k$ move slowly. 
\begin{lemma}\label{lemma:xstar-lip}
     Suppose Assumptions \ref{assu:f-contrac} and \ref{assu:Lipschitz} hold. Then the map $y\mapsto\xstar(y)$ is Lipschitz with parameter $L_0\coloneqq L/(1-\lambda)$, i.e.,
    \begin{equation*}
      \|\xstar(y_1)-\xstar(y_2)\|\leq L_0\|y_1-y_2\|,
    \end{equation*}
    for $y_1,y_2\in\RR^{d_2}$.
\end{lemma}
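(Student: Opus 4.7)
The plan is to exploit the defining fixed-point relation $f(x^*(y), y) = x^*(y)$ together with the contractivity in $x$ and Lipschitzness in $y$, in the standard way one proves Lipschitz continuity of the fixed point of a parametrized contraction. First I would write
\[
\|x^*(y_1) - x^*(y_2)\| = \|f(x^*(y_1), y_1) - f(x^*(y_2), y_2)\|,
\]
and then insert the intermediate point $f(x^*(y_2), y_1)$ (or equivalently $f(x^*(y_1), y_2)$) and apply the triangle inequality.

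Next I would control each of the two resulting terms. For the first term $\|f(x^*(y_1), y_1) - f(x^*(y_2), y_1)\|$, Assumption \ref{assu:f-contrac} gives a bound of $\lambda \|x^*(y_1) - x^*(y_2)\|$. For the second term $\|f(x^*(y_2), y_1) - f(x^*(y_2), y_2)\|$, I would use Assumption \ref{assu:Lipschitz}; since the Lipschitz bound there is stated as a sum $\|f(\cdot)-f(\cdot)\| + \|g(\cdot)-g(\cdot)\| \leq L(\|x_1-x_2\|+\|y_1-y_2\|)$, dropping the non-negative $g$-difference and setting $x_1 = x_2 = x^*(y_2)$ yields $\|f(x^*(y_2), y_1) - f(x^*(y_2), y_2)\| \leq L \|y_1 - y_2\|$.

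Combining, I get
\[
\|x^*(y_1) - x^*(y_2)\| \leq \lambda \|x^*(y_1) - x^*(y_2)\| + L \|y_1 - y_2\|,
\]
and since $\lambda < 1$ by Assumption \ref{assu:f-contrac}, I can move the contractive term to the left-hand side and divide by $1 - \lambda$, obtaining the claimed Lipschitz bound with $L_0 = L/(1-\lambda)$. There is no real obstacle here: the only thing to be mildly careful about is extracting a bound on $\|f(x, y_1) - f(x, y_2)\|$ alone from the joint Lipschitz statement in Assumption \ref{assu:Lipschitz}, which works because the $g$-term is non-negative and can simply be discarded.
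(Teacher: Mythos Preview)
Your proposal is correct and follows essentially the same route as the paper's proof: use the fixed-point identity, insert the intermediate point $f(x^*(y_2),y_1)$, apply contractivity in $x$ and Lipschitzness in $y$, and rearrange using $\lambda<1$. The only addition is your explicit remark about dropping the non-negative $g$-difference to extract the $y$-Lipschitz bound on $f$ alone, which the paper leaves implicit.
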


As stated in the introduction, the key difficulty in obtaining a $\mathcal{O}(1/k)$ mean square bound on non-linear two-time-scale SA is the presence of noise $M'_{k+1}$ in the slower iteration. To counter this issue, we define the averaged noise sequence $U_0=0$ and $U_{k+1}=(1-\beta_k)U_k+\beta_kM'_{k+1}$ for all $k\geq 0$. Note that this averaging is just a proof technique and not a modification to the algorithm. A similar averaged noise variable could have been introduced for the noise $M_k$ as well and would have led to the same bound. We work with noise $M_k$ directly to illustrate how different noise sequences are dealt with.

We also define the modified iterates $z_k=y_k-U_k$ for all $k\geq 0$. In the following lemma, we present how our required bound can be written in terms of the newly defined variables. We also present a rearrangement of \eqref{iter-main} in terms of $z_k$ and finally, we present a bound on the averaged noise sequence.

\begin{lemma}\label{lemma:z_k_and_U_k}
    \begin{enumerate}[label=(\alph*)]
        \item For all $k\geq 0$,
        \begin{align*}
    &\EE\left[\|x_k-\xstar(y_k)\|^2+\|y_k-\ystar\|^2\right]\\
    &\leq 2\EE\left[\|x_k-\xstar(z_k)\|^2+\|z_k-\ystar\|^2\right]\\
    &\;\;+2(1+L_0^2)\EE\left[\|U_k\|^2\right].
\end{align*}
\item The iteration \eqref{iter-main} can be rewritten as:
\begin{equation}\label{iter-changed}
    \begin{aligned}
&x_{k+1}=x_k+\alpha_k (f(x_k,z_k)-x_k+M_{k+1}+d_k)\\
&z_{k+1}=z_k+\beta_k (g(x_k,z_k)-z_k+e_k),
\end{aligned}
\end{equation}
for all $k\geq 0$, where $d_k=f(x_k,y_k)-f(x_k,z_k)$ and $e_k=g(x_k,y_k)-g(x_k,z_k)$. Here, $\|d_k\|^2$ and $\|e_k\|^2$ are both upper bounded by $L^2\|U_k\|^2$.
\item Suppose $\EE\left[1+\|x_i\|^2+\|y_i\|^2\right]\leq \Gamma_1$ for all $ i\leq k-1$ and some $\Gamma_1$, then for $\beta>1$,
$$\EE\left[\|U_m\|^2\right]\leq 2\cc_1\Gamma_1\beta_m, \;\forall m\leq k.$$
    \end{enumerate}
\end{lemma}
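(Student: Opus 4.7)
For part (a), I would start from the identity $y_k = z_k + U_k$ and apply $\|u+v\|^2 \leq 2\|u\|^2 + 2\|v\|^2$ to each of the two summands on the left. For the first, write $x_k - \xstar(y_k) = (x_k - \xstar(z_k)) + (\xstar(z_k) - \xstar(y_k))$ and invoke the $L_0$-Lipschitz property of $y \mapsto \xstar(y)$ established in Lemma \ref{lemma:xstar-lip} to bound the cross error by $L_0\|U_k\|$. For the second, split $y_k - \ystar = (z_k - \ystar) + U_k$. Summing and taking expectation yields the claim immediately.

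For part (b), I would substitute $z_{k+1} = y_{k+1} - U_{k+1}$ using the iteration \eqref{iter-main} together with the defining recursion $U_{k+1} = (1-\beta_k)U_k + \beta_k M'_{k+1}$. The key observation is that $\beta_k M'_{k+1}$ appears in both the $y$-update and in $U_{k+1}$, so the noise cancels exactly and $z_{k+1}$ has a noise-free drift. Collecting terms and using $U_k - y_k = -z_k$, one obtains $z_{k+1} = z_k + \beta_k(g(x_k, y_k) - z_k)$; replacing $g(x_k, y_k)$ by $g(x_k, z_k) + e_k$ with $e_k := g(x_k,y_k) - g(x_k,z_k)$, and similarly introducing $d_k$ in the faster iteration by adding and subtracting $f(x_k, z_k)$, gives the rewritten system \eqref{iter-changed}. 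The bounds $\|d_k\|, \|e_k\| \leq L\|U_k\|$ then follow immediately from Assumption \ref{assu:Lipschitz} since $y_k - z_k = U_k$.

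For part (c), my plan is induction on $m$. The base case $m=0$ is trivial since $U_0 = 0$. For the inductive step, the martingale difference property of $M'_{m+1}$ with respect to $\FF_m$ kills the cross term in the expansion of $\|U_{m+1}\|^2 = \|(1-\beta_m) U_m + \beta_m M'_{m+1}\|^2$, producing the clean recursion $\EE[\|U_{m+1}\|^2] \leq (1-\beta_m)^2 \EE[\|U_m\|^2] + \cc_1 \Gamma_1 \beta_m^2$, where Assumption \ref{assu:Martingale} combined with the hypothesized moment bound on $(x_m, y_m)$ supplies $\EE[\|M'_{m+1}\|^2] \leq \cc_1 \Gamma_1$. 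Plugging in the induction hypothesis $\EE[\|U_m\|^2] \leq 2\cc_1 \Gamma_1 \beta_m$ and rearranging, closure of the induction boils down to the scalar inequality $2(1-\beta_m)^2 \beta_m + \beta_m^2 \leq 2\beta_{m+1}$; with $\beta_m = \beta/(m+K_1)$ and $K_1$ chosen large enough that $\beta_m \leq 1/2$, this reduces to a one-line algebraic check that holds whenever $\beta > 1$.

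The main obstacle I foresee is matching constants in the final inequality of part (c). A naive bound of the form $\cc_1 \Gamma_1 \beta_m$ would fail to absorb the additive $\beta_m^2$ noise injected at each step; the factor of $2$ in the stated bound gives just enough slack, and the condition $\beta > 1$ ensures that the geometric decay $(1-\beta_m)^2$ dominates the step-size difference $\beta_m - \beta_{m+1}$ at the required rate. Everything else is essentially algebraic.
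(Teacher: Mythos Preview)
Your arguments for parts (a) and (b) are essentially identical to the paper's: the same decomposition via $y_k=z_k+U_k$, the same use of Lemma~\ref{lemma:xstar-lip} for the $L_0$-Lipschitz bound, and the same cancellation of $\beta_k M'_{k+1}$ when forming $z_{k+1}=y_{k+1}-U_{k+1}$.

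For part (c) your route differs from the paper's. The paper unrolls the recursion to write
\[
U_m=\sum_{i=0}^{m-1}\beta_i\!\!\prod_{j=i+1}^{m-1}(1-\beta_j)\,M'_{i+1},
\]
invokes orthogonality of the martingale increments to obtain $\EE[\|U_m\|^2]\le \cc_1\Gamma_1\sum_{i}\beta_i^2\prod_j(1-\beta_j)$, and then appeals to the auxiliary Lemma~\ref{lemma:aux} (with $\qfrak=2,\pfrak=1$) to bound that sum by $2\beta_m$. Your induction is a legitimate alternative: the cross term vanishes for the same martingale reason, giving the exact recursion $\EE[\|U_{m+1}\|^2]=(1-\beta_m)^2\EE[\|U_m\|^2]+\beta_m^2\EE[\|M'_{m+1}\|^2]$, and the closure inequality $2(1-\beta_m)^2\beta_m+\beta_m^2\le 2\beta_{m+1}$ reduces to $2(\beta_m-\beta_{m+1})\le \beta_m^2(3-2\beta_m)$, which holds once $\beta>1$ and $\beta_m\le 1/2$ since $\beta_m-\beta_{m+1}<\beta_m^2/\beta$. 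Your approach is more self-contained (no need for Lemma~\ref{lemma:aux}); the paper's approach has the advantage of reusing a summation lemma that is needed elsewhere in the analysis anyway. Both require an implicit smallness condition on $\beta_m$ beyond the stated $\beta>1$ (the paper in fact uses $\beta\ge 2$ when invoking Lemma~\ref{lemma:aux}), so your extra assumption $\beta_m\le 1/2$ is not a weakness relative to the original.
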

The first two parts of the lemma imply that it is sufficient to work with the iteration in \eqref{iter-changed} and obtain a bound on $\EE\left[\|x_k-\xstar(z_k)\|^2+\|z_k-\ystar\|^2\right]$. In the third part of the lemma, we show that $\EE\left[\|U_k\|^2\right]$ decays at a rate of $\beta_k$ if the iterates are assumed to be bounded till time $k-1$.

\textbf{Intuition behind Auxiliary Iterates:}\quad 
The key intuition behind introducing the averaged noise and the auxiliary iterates is to transform the original iteration into one in which the noise sequence has a decaying variance, instead of the constant variance as in the original iteration. This simplifies the analysis significantly and allows for stronger bounds.

When describing the proof technique, we first defined the averaged noise $U_k$, and then the auxiliary iterates $z_k=y_k-U_k$. Equivalently, one may begin by introducing the auxiliary recursion.
\begin{equation}\label{iter-alternate}
    z_{k+1}=z_k+\beta_k(g(x_k,y_k)-z_k).
\end{equation}
Note that this is precisely the iteration in \eqref{iter-changed}. It can then be showed that $y_k=z_k+U_k$, where $U_k$ is the averaged noise sequence. This averaged noise sequence has previously appeared in \cite{Bravo} for analysis of non-expansive SA. 

The use of auxiliary iterations is standard in analysis of stochastic iterations. But these differ from our definition. For example, in \cite{Chandak_conc}, an auxiliary iteration was introduced to obtain concentration bounds for SA. Extending their construction to the two-time-scale setting, the iteration would be:
\begin{equation}\label{iter-wrong}
    z'_{k+1}=z'_k+\beta_k(g(x_k,z'_k)-z'_k).
\end{equation}
As opposed to our definition, this is a noiseless fixed point iteration. Although more intuitive, it does not yield the improvements obtained through our definition \eqref{iter-alternate}, as it lacks the required averaging properties.

We now continue with the proof. Define $\lambda'=1-\lambda$ and $\mu'=1-\mu$. We next present an intermediate recursive bound on $\EE\left[\|x_{m+1}-\xstar(z_{m+1})\|^2\right]$ and $\EE\left[\|z_{m+1}-\ystar\|^2\right]$.

\begin{lemma}\label{lemma:recursive}
    Suppose Assumptions \ref{assu:f-contrac}-\ref{assu:Martingale} are satisfied. Moreover, assume that the stepsize sequences $\alpha_k$ and $\beta_k$ satisfy either Assumption \ref{assu:stepsize-1/k} or Assumption \ref{assu:stepsize-1/k^a}. Then, for all $m\geq 0$,
    \begin{align*}
    &\EE\left[\|x_{m+1}-\xstar(z_{m+1})\|^2\right]\nonumber\\
    &\leq\left(1-\lambda'\alpha_m\right)\EE\left[\|x_m-\xstar(z_m)\|^2\right]+\frac{6L^2\alpha_m}{\lambda'}\EE\left[\|U_m\|^2\right]\nonumber\\
    &\;\;+\frac{\mu'\beta_m}{2}\EE\left[\|z_m-\ystar\|^2\right]+3\alpha_m^2\EE\left[\|M_{m+1}\|^2\right],
\end{align*}
and
\begin{align*}
    &\EE\left[\|z_{m+1}-\ystar\|^2\right]\leq \left(1-\frac{3\mu'\beta_m}{2}\right)\EE\left[\|z_m-\ystar\|^2\right]\nonumber\\
    &+\frac{9\beta_mL^2}{\mu'}\EE\left[\|x_m-\xstar(z_m)\|^2\right]+\frac{9\beta_mL^2}{\mu'}\EE\left[\|U_m\|^2\right].
\end{align*}
\end{lemma}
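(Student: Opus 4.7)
The plan is to establish the two recursions by one-step expansion of the transformed iterates \eqref{iter-changed}, combining the contractivity assumptions, Young's inequality with stepsize-scaled parameters, and the martingale property of $M_{m+1}$. The main inputs are the Lipschitz property of $\xstar(\cdot)$ (Lemma~\ref{lemma:xstar-lip}), the variance bound from Assumption~\ref{assu:Martingale}, and the perturbation bounds $\|d_m\|^2,\|e_m\|^2\le L^2\|U_m\|^2$ from Lemma~\ref{lemma:z_k_and_U_k}(b).

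For the faster recursion I would first split
\begin{equation*}
x_{m+1}-\xstar(z_{m+1})=\bigl(x_{m+1}-\xstar(z_m)\bigr)+\bigl(\xstar(z_m)-\xstar(z_{m+1})\bigr)
\end{equation*}
and apply $\|a+b\|^2\le(1+c)\|a\|^2+(1+1/c)\|b\|^2$ with $c=\Theta(\beta_m)$ so that the target-drift contribution is first-order (not second-order) in $\beta_m$. Into the first square I plug in the faster update; conditioning on $\FF_m$ kills the cross term with $M_{m+1}$ and leaves $\alpha_m^2\EE[\|M_{m+1}\|^2\mid\FF_m]$. The remaining deterministic piece is $(1-\alpha_m)(x_m-\xstar(z_m))+\alpha_m(f(x_m,z_m)-\xstar(z_m))+\alpha_m d_m$; triangle inequality with Assumption~\ref{assu:f-contrac} (and the identity $\xstar(z_m)=f(\xstar(z_m),z_m)$) gives the factor $(1-\lambda'\alpha_m)$ on the first two terms, and a second Young's split with parameter of order $\alpha_m$ peels off $\alpha_m d_m$ and yields the $\|U_m\|^2$ term via $\|d_m\|^2\le L^2\|U_m\|^2$. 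For the drift square I invoke Lemma~\ref{lemma:xstar-lip} and the slower update to get $\|\xstar(z_{m+1})-\xstar(z_m)\|^2\le L_0^2\beta_m^2\|g(x_m,z_m)-z_m+e_m\|^2$, then bound $g(x_m,z_m)-z_m$ by inserting $g(\xstar(z_m),z_m)$ and $\ystar=g(\xstar(\ystar),\ystar)$ and using Assumptions~\ref{assu:g-contract} and~\ref{assu:Lipschitz} to extract contributions proportional to $\|x_m-\xstar(z_m)\|$, $\|z_m-\ystar\|$, and $\|U_m\|$. Tuning $c$ so that the coefficient of $\|z_m-\ystar\|^2$ lands on $\mu'\beta_m/2$ and absorbing the residual $O(\beta_m)$ piece on $\|x_m-\xstar(z_m)\|^2$ into the $\lambda'\alpha_m$ margin (via $\beta_m/\alpha_m\le C_1$ under Assumption~\ref{assu:stepsize-1/k}, or $\beta_m/\alpha_m\to 0$ under Assumption~\ref{assu:stepsize-1/k^a}) recovers the stated $(1-\lambda'\alpha_m)$ contraction.

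For the slower recursion I would expand
\begin{equation*}
\|z_{m+1}-\ystar\|^2=\|z_m-\ystar\|^2+2\beta_m\langle z_m-\ystar,\,g(x_m,z_m)-z_m+e_m\rangle+\beta_m^2\|g(x_m,z_m)-z_m+e_m\|^2
\end{equation*}
and decompose $g(x_m,z_m)-z_m=[g(x_m,z_m)-g(\xstar(z_m),z_m)]+[g(\xstar(z_m),z_m)-\ystar]-(z_m-\ystar)$. Cauchy--Schwarz combined with Assumptions~\ref{assu:g-contract} and~\ref{assu:Lipschitz} bounds the inner product above by $-\mu'\|z_m-\ystar\|^2+L\|z_m-\ystar\|\,\|x_m-\xstar(z_m)\|$. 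A first Young's step handles the cross product and yields a $\beta_mL^2/\mu'$ coefficient on $\|x_m-\xstar(z_m)\|^2$ along with a controlled positive correction on $\|z_m-\ystar\|^2$; a second Young's step handles $2\beta_m\langle z_m-\ystar,e_m\rangle$ using $\|e_m\|^2\le L^2\|U_m\|^2$ and produces the $\beta_mL^2/\mu'$ coefficient on $\|U_m\|^2$. The residual $\beta_m^2\|\cdot\|^2$ piece is bounded by $3\beta_m^2[L^2\|x_m-\xstar(z_m)\|^2+(1+\mu)^2\|z_m-\ystar\|^2+L^2\|U_m\|^2]$ and absorbed into the dominant linear-in-$\beta_m$ terms using $K_1\ge C_2$ or $K_2\ge D_1$, giving the stated $1-3\mu'\beta_m/2$ contraction together with the $9\beta_mL^2/\mu'$ coefficients.

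The main obstacle is the simultaneous tuning of all Young's parameters so that every coefficient in the statement lands as advertised. In particular, the drift-Young's split on the faster side forces $c=\Theta(\beta_m)$, which inflates $(1-\lambda'\alpha_m)^2$ by a factor $1+\Theta(\beta_m)$ and leaves an extra $\Theta(\beta_m)$ cross-term on $\|x_m-\xstar(z_m)\|^2$ that must be dominated by the $2\lambda'\alpha_m$ contraction margin; likewise on the slower side a small excess must be absorbed via $K_1\ge C_2$ or $K_2\ge D_1$. Pinning down these absorption inequalities uniformly in $m$ is what fixes the explicit constants $C_1,C_2,D_1$ of Assumptions~\ref{assu:stepsize-1/k} and~\ref{assu:stepsize-1/k^a}, and is the most delicate bookkeeping step in the argument.
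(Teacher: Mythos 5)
Your proposal is correct and follows essentially the same route as the paper's proof: a one-step expansion of the transformed iterates \eqref{iter-changed}, contraction from Assumptions \ref{assu:f-contrac}--\ref{assu:g-contract} via the fixed-point identities, Lemma \ref{lemma:xstar-lip} to control the target drift $\xstar(z_m)-\xstar(z_{m+1})$ through the slower update, vanishing of the $M_{m+1}$ cross terms (note $z_{m+1}$ is $\FF_m$-measurable, so this works in your decomposition too), and Young/AM-GM splits whose $O(\beta_m)$ excess on $\|x_m-\xstar(z_m)\|^2$ is absorbed into the $\lambda'\alpha_m$ margin using $\beta/\alpha\le C_1$ or $K_2\ge D_1$. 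The only cosmetic difference is that you package the drift term via a multiplicative $(1+c)/(1+1/c)$ split with $c=\Theta(\beta_m)$ where the paper keeps the explicit cross term and applies Cauchy--Schwarz plus AM-GM, which is the same estimate.
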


The proofs for Theorem \ref{thm:main-1/k} and Theorem \ref{thm:main-1/k^a} diverge at this point. Directly adding the two recursive bounds above is sufficient to prove Theorem \ref{thm:main-1/k}. For Theorem \ref{thm:main-1/k^a}, the two recursive bounds are appropriately scaled before adding. This recursion is solved to obtain an intermediate bound which is again substituted in the above lemma to complete the proof.

\subsection{$\alpha_k=\mathcal{O}(1/k)$}\label{subsec:outline-1/k}
For simplicity, define $$S_m=\EE\left[\|x_m-\xstar(z_m)\|^2+\|z_m-\ystar\|^2\right].$$
The following lemma gives a recursive bound on $S_{m+1}$ for $m\leq k-1$ under the assumption that the iterates are bounded till $k-1$. This recursion is then solved in the second part.

\begin{lemma}\label{lemma:recursive-S}
Suppose the setting for Theorem \ref{thm:main-1/k} holds. Suppose $\EE\left[1+\|x_i\|^2+\|y_i\|^2\right]\leq \Gamma_1$ for all $i\leq k-1$ and some constant $\Gamma_1>0$. 
\begin{enumerate}[label=(\alph*)]
    \item For all $m\leq k-1$,
    \begin{align*}
    &S_{m+1}\leq (1-\mu'\beta_m)S_m+4\cc_1\Gamma_1\alpha_m^2.
\end{align*}
\item \begin{align*}
    &S_k\leq S_0\left(\frac{K_1}{k+K_1}\right)+\frac{8\cc_1\Gamma_1}{\mu'\gamma^2}\frac{\beta}{k+K_1}.
\end{align*}
\end{enumerate}
\end{lemma}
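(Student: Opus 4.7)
\textbf{Proposal for Lemma \ref{lemma:recursive-S}.} My plan is to prove (a) by adding the two recursions supplied by Lemma \ref{lemma:recursive} and absorbing the cross terms using the time-scale separation $\beta/\alpha\le C_1$, then to prove (b) by unrolling the resulting scalar recursion in the standard way. The hypothesis $\EE[1+\|x_i\|^2+\|y_i\|^2]\le\Gamma_1$ for $i\le k-1$ will be used in two places: to bound $\EE[\|M_{m+1}\|^2]\le\cc_1\Gamma_1$ via Assumption \ref{assu:Martingale}, and to bound $\EE[\|U_m\|^2]\le 2\cc_1\Gamma_1\beta_m$ via Lemma \ref{lemma:z_k_and_U_k}(c) (noting that $\beta=\beta_0 K_1/(\,\cdot\,)$ is chosen so $\beta>1$ follows from $\beta\ge 2/(1-\mu)$).

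For part (a), summing the two bounds in Lemma \ref{lemma:recursive} gives on the right-hand side the coefficient $1-\lambda'\alpha_m+\frac{9L^2\beta_m}{\mu'}$ on $\EE[\|x_m-\xstar(z_m)\|^2]$ and the coefficient $1-\mu'\beta_m$ on $\EE[\|z_m-\ystar\|^2]$. The first of these can be driven below $1-\mu'\beta_m$ provided $\beta_m/\alpha_m=\beta/\alpha$ is small enough, and this is exactly what the constant $C_1$ in Assumption \ref{assu:stepsize-1/k} will encode: taking $C_1\le\lambda'\mu'/(\mu'^2+9L^2)$ forces $\lambda'\alpha_m\ge(\mu'+9L^2/\mu')\beta_m$. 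Hence both terms contribute at most $(1-\mu'\beta_m)S_m$. The residual deterministic terms are then $3\alpha_m^2\EE[\|M_{m+1}\|^2]\le 3\cc_1\Gamma_1\alpha_m^2$ and $(6L^2\alpha_m/\lambda'+9L^2\beta_m/\mu')\cdot 2\cc_1\Gamma_1\beta_m$, which using $\beta_m\le(\beta/\alpha)\alpha_m\le C_1\alpha_m$ collapses into a bound of the form $\text{const}\cdot\cc_1\Gamma_1\alpha_m^2$. Choosing $C_1$ (and possibly $K_1$ large enough to ensure $\beta_m\le 1$) so that this constant is at most $1$ yields the claimed $4\cc_1\Gamma_1\alpha_m^2$ remainder.

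For part (b), I iterate the recursion of part (a). Writing $c:=\mu'\beta$ and using $\beta\ge 2/(1-\mu)$ so $c\ge 2$, I would bound
\[
\prod_{j=m+1}^{k-1}(1-\mu'\beta_j)\le\prod_{j=m+1}^{k-1}\frac{j+K_1-c/(\,\cdot\,)}{j+K_1}\le\left(\frac{m+1+K_1}{k+K_1}\right)^{c},
\]
via the standard $\log(1-x)\le -x$ estimate applied to the telescoping sum $\sum_{j=m+1}^{k-1}1/(j+K_1)\ge\log((k+K_1)/(m+1+K_1))$. The initial-condition contribution is then $S_0(K_1/(k+K_1))^c\le S_0\,K_1/(k+K_1)$. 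For the noise contribution, substituting $\alpha_m^2=\beta^2/(\gamma^2(m+K_1)^2)$ reduces matters to estimating $\sum_{m=0}^{k-1}(m+K_1)^{-2}(m+1+K_1)^{c}(k+K_1)^{-c}$; comparing this to an integral and using $c\ge 2$ gives a bound of order $1/[(c-1)(k+K_1)]$, and plugging $c-1\ge\mu'\beta/2$ produces exactly the constant $8\cc_1\Gamma_1\beta/(\mu'\gamma^2)$ claimed.

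The step I expect to require the most care is pinning down $C_1$ (and a matching lower bound for $K_1$) in part (a) so that every cross term from the $\alpha_m\beta_m$ and $\beta_m^2$ combinations is uniformly absorbed into a single $4\cc_1\Gamma_1\alpha_m^2$ remainder; part (b) is then routine, hinging only on the fact that $\mu'\beta\ge 2$ gives a product decaying at least as fast as $(K_1/(k+K_1))^2$, which is what makes the summation term dominate at the correct $\mathcal{O}(1/k)$ rate.
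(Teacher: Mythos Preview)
Your proposal is correct and follows essentially the same approach as the paper: for (a) you add the two recursions from Lemma \ref{lemma:recursive} and absorb the cross terms via $\beta/\alpha\le C_1$, exactly as the paper does, and for (b) you unroll the recursion and bound the product and sum contributions separately. The only cosmetic difference is that the paper invokes a cited corollary and the auxiliary Lemma \ref{lemma:aux} for the product and summation estimates, whereas you carry out the same $\log(1-x)\le -x$ and integral-comparison arguments by hand; the resulting bounds and constants match.
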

The key step for the second part above is to identify that $\beta_k/\alpha_k=\beta/\alpha\eqqcolon \gamma$. Hence the term $4\cc_1\Gamma_1\alpha_m^2$ in part (a) can be replaced with $(4\cc_1\Gamma_1/\gamma^2)\beta_m^2$. This gives us a recursion in a standard form, which is solved to obtain the bound. Note that $z_0=y_0$ and $S_0=\|x_0-\xstar(y_0)\|^2+\|y_0-\ystar\|^2$. 

Next, we use Lemma \ref{lemma:z_k_and_U_k} to get the required bound on $\EE\left[\|x_k-\xstar(y_k)\|^2+\|y_k-\ystar\|^2\right]$. Moreover, we show that the iterates are bounded for all $k$.
\begin{lemma}\label{lemma:final-1/k}
    Suppose the setting for Theorem \ref{thm:main-1/k} holds. Define \begin{align*}
    \Gamma_2&\coloneqq2+4(4L_0^2+2)S_0+8\|\xstar\|^2+4\|\ystar\|^2.
\end{align*} 
If $\EE\left[1+\|x_i\|^2+\|y_i\|^2\right]\leq \Gamma_2$ for all $i\leq k-1$, then
\begin{enumerate}[label=(\alph*)]
    \item $\EE\left[\|x_k-\xstar(y_k)\|^2+\|y_k-\ystar\|^2\right]\leq \frac{C_3}{k+K_1}.$
    \item $\EE\left[1+\|x_k\|^2+\|y_k\|^2\right]\leq \Gamma_2.$
\end{enumerate}
\end{lemma}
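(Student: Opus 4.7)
The plan is to treat this lemma as the inductive step in the boundedness argument for Theorem \ref{thm:main-1/k}: I assume $\EE[1+\|x_i\|^2+\|y_i\|^2]\leq\Gamma_2$ for $i\leq k-1$ and derive (a) and (b) at index $k$. All three earlier lemmas apply directly because their hypotheses only require iterate bounds up to time $k-1$, so we may take $\Gamma_1=\Gamma_2$ throughout.

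For part (a), the first step is Lemma \ref{lemma:recursive-S}(b), which yields
\[
S_k \leq \frac{K_1 S_0}{k+K_1} + \frac{8\cc_1 \Gamma_2}{\mu'\gamma^2}\cdot\frac{\beta}{k+K_1}.
\]
Next, Lemma \ref{lemma:z_k_and_U_k}(c) gives $\EE[\|U_k\|^2]\leq 2\cc_1\Gamma_2\beta_k$, which is also of order $1/(k+K_1)$. Substituting both into Lemma \ref{lemma:z_k_and_U_k}(a),
\[
\EE[\|x_k-\xstar(y_k)\|^2+\|y_k-\ystar\|^2] \leq 2 S_k + 2(1+L_0^2)\EE[\|U_k\|^2],
\]
produces a bound of the form $C_3/(k+K_1)$, with $C_3$ depending linearly on $\Gamma_2$ through the two noise constants above.

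For part (b), I would bound $\|x_k\|^2$ and $\|y_k\|^2$ by triangle inequality combined with the Lipschitz property of $\xstar(\cdot)$ from Lemma \ref{lemma:xstar-lip}. Splitting $\|x_k\|^2\leq 2\|x_k-\xstar(y_k)\|^2+2\|\xstar(y_k)\|^2$ and then $\|\xstar(y_k)\|^2\leq 2L_0^2\|y_k-\ystar\|^2+2\|\xstar\|^2$, together with $\|y_k\|^2\leq 2\|y_k-\ystar\|^2+2\|\ystar\|^2$, gives
\[
\EE[1+\|x_k\|^2+\|y_k\|^2] \leq 1 + (4L_0^2+2)\EE[\|x_k-\xstar(y_k)\|^2+\|y_k-\ystar\|^2] + 4\|\xstar\|^2 + 2\|\ystar\|^2.
\]
Using part (a), the middle expectation is at most $C_3/(k+K_1)\leq C_3/K_1$. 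The last step is to check that the additive slack built into $\Gamma_2$ beyond the three terms $2(4L_0^2+2)S_0+4\|\xstar\|^2+2\|\ystar\|^2+1$ is large enough to swallow $(4L_0^2+2)C_3/K_1$, which reduces to a lower bound on $K_1$ and fixes the value of $C_2$.

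The main obstacle is the circular dependence between $C_3$ and $\Gamma_2$: the bound $C_3$ obtained in (a) is proportional to $\Gamma_2$, but in (b) this same $C_3$ must fit inside the slack of $\Gamma_2$. The resolution is that in (b), $C_3$ appears divided by $K_1$, so making $K_1$ sufficiently large (which is precisely what $K_1\geq C_2$ encodes) forces the ratio to fall below the fixed slack, closing the induction. Unfolding this inequality gives the explicit threshold $C_2$ in terms of $\cc_1,\beta,\gamma=\beta/\alpha,L_0,\mu'$, and $S_0$. Once both (a) and (b) are established for the given $k$, the theorem's overall induction on $k$ propagates the uniform-in-$k$ boundedness hypothesis needed for Lemmas \ref{lemma:z_k_and_U_k}(c) and \ref{lemma:recursive-S} to be applied again at the next step.
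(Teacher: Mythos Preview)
Your proposal is correct and follows essentially the same route as the paper: part (a) combines Lemma~\ref{lemma:recursive-S}(b) with Lemma~\ref{lemma:z_k_and_U_k}(a),(c), and part (b) uses the same triangle-inequality decomposition together with a large-$K_1$ condition to absorb the $\Gamma_2$-proportional terms. One small refinement: if you split $C_3$ into its $S_0 K_1$ part and its $\Gamma_2$-proportional part before dividing by $K_1$ (as the paper does), the resulting lower bound $C_2$ on $K_1$ is actually independent of $S_0$, since the condition reduces to making the coefficient of $\Gamma_2$ at most $1/2$.
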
 
The above lemma shows that if $\EE[1+\|x_i\|^2+\|y_i\|^2]$ is bounded by $\Gamma_2$ for $i\leq k-1$, then $\EE[1+\|x_k\|^2+\|y_k\|^2]$ is also bounded by $\Gamma_2$. By strong induction, this implies that $\EE[1+\|x_k\|^2+\|y_k\|^2]$ is bounded by $\Gamma_2$ for all $k$. Combining this boundedness with the first part of the lemma gives us the required bound on the iterates for all $k$. This has been formally shown in Appendix \ref{app:proof-main-1/k}.

\subsection{$\alpha_k=\mathcal{O}(1/k^\afrak)$ for $\afrak\in(0.5,1)$}\label{subsec:outline-1/k^a}
For simplicity, define 
$$T_m=\EE\left[\frac{18L^2}{\lambda'\mu'}\frac{\beta_m}{\alpha_m}\|x_{m}-\xstar(z_m)\|^2+\|z_m-\ystar\|^2\right].$$
We first give a recursive bound on $T_{m+1}$ for $m\leq k-1$ under the assumption that the iterates are bounded till time $k-1$. This recursion is then solved in the second part.
\begin{lemma}\label{lemma:recursive-T}
        Suppose the setting for Theorem \ref{thm:main-1/k^a} holds. Suppose $\EE\left[1+\|x_i\|^2+\|y_i\|^2\right]\leq \Gamma_1$ for all $i\leq k-1$ and some constant $\Gamma_1>0$.
        \begin{enumerate}[label=(\alph*)]
            \item For all $m\leq k-1$,
            \begin{align*}
    T_{m+1}\leq (1-\mu'\beta_m)T_m+\frac{70L^2\cc_1\Gamma_1}{\lambda'\mu'}\alpha_m\beta_m.
\end{align*}
\item For all $m\leq k$,
$$\EE[\|z_m-\ystar\|^2]\leq S_0\left(\frac{K_2}{m+K_2}\right)+\frac{140L^2\cc_1\Gamma_1}{\lambda'\mu'^2}\alpha_m.$$
        \end{enumerate}
\end{lemma}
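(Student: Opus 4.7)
The plan is to establish part (a) by forming the linear combination $c_m\cdot(\text{x-recursion})+(\text{z-recursion})$ from Lemma~\ref{lemma:recursive}, with the specific weight $c_m:=18L^2\beta_m/(\lambda'\mu'\alpha_m)$ that defines $T_m$. The key algebraic coincidence driving the choice of $c_m$ is that $c_m\lambda'\alpha_m=18L^2\beta_m/\mu'$ is exactly twice the cross-coupling coefficient $9L^2\beta_m/\mu'$ appearing in front of $\EE[\|x_m-\xstar(z_m)\|^2]$ in the $z$-recursion. Hence half of the $x$-recursion's diagonal decay (scaled by $c_m$) absorbs this cross coupling, collapsing the coefficient of $\EE[\|x_m-\xstar(z_m)\|^2]$ in the combined sum to $c_m-9L^2\beta_m/\mu'=c_m(1-\lambda'\alpha_m/2)$.

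To then read off the required $(1-\mu'\beta_m)T_m$ structure on the right-hand side, I would verify two stepsize conditions holding whenever $K_2\geq D_1$ is large enough. First, $\beta_m/\alpha_m\leq\lambda'/(2\mu')$ (equivalently $c_m\leq 9L^2/\mu'^2$), which turns $c_m(1-\lambda'\alpha_m/2)$ into at most $c_m(1-\mu'\beta_m)$. Second, $c_m\leq 1$, which controls the $\EE[\|z_m-\ystar\|^2]$ coefficient, since the combination gives $c_m\mu'\beta_m/2+1-3\mu'\beta_m/2\leq 1-\mu'\beta_m$ precisely when $c_m\leq 1$. Both inequalities reduce to an upper bound on $(\beta/\alpha)K_2^{\afrak-1}$, hence determine $D_1$. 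I would also use $c_{m+1}\leq c_m$ (immediate from $\beta_m/\alpha_m=(\beta/\alpha)(m+K_2)^{\afrak-1}$ being decreasing for $\afrak<1$) on the left-hand side to replace $c_{m+1}$ by $c_m$ so the combination produces $T_m$ cleanly. For the noise terms, I would substitute $\EE[\|U_m\|^2]\leq 2\cc_1\Gamma_1\beta_m$ from Lemma~\ref{lemma:z_k_and_U_k}(c) and $\EE[\|M_{m+1}\|^2]\leq\cc_1\Gamma_1$ from Assumption~\ref{assu:Martingale}, use the identity $c_m\alpha_m^2=18L^2\alpha_m\beta_m/(\lambda'\mu')$ for the martingale term, bound $\beta_m^2\leq\alpha_m\beta_m$, and crucially use $c_m\leq 1$ (rather than the explicit identity for $c_m\alpha_m$) inside the $\EE[\|U_m\|^2]$ coefficient so all contributions stay linear in $L^2$; collecting absolute constants then yields the claimed $70L^2\cc_1\Gamma_1\alpha_m\beta_m/(\lambda'\mu')$ bound.

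For part (b), I would solve the linear recursion from (a) by telescoping:
\begin{equation*}
T_m\leq T_0\prod_{i=0}^{m-1}(1-\mu'\beta_i)+C\sum_{i=0}^{m-1}\alpha_i\beta_i\prod_{j=i+1}^{m-1}(1-\mu'\beta_j),
\end{equation*}
with $C=70L^2\cc_1\Gamma_1/(\lambda'\mu')$. The standard estimate $\prod_{j=i+1}^{m-1}(1-\mu'\beta_j)\leq\bigl((i+K_2)/(m+K_2)\bigr)^{\mu'\beta}$ bounds the homogeneous term by $T_0(K_2/(m+K_2))^{\mu'\beta}$, which is at most $S_0\,K_2/(m+K_2)$ since $T_0\leq S_0$ (from $c_0\leq 1$) and $\mu'\beta\geq 2\geq 1$. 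The inhomogeneous sum evaluates via a Riemann-sum estimate to $C\alpha\beta/[(\mu'\beta-\afrak)(m+K_2)^{\afrak}]$; since $\mu'\beta\geq 2$ and $\afrak<1$ together give $\mu'\beta-\afrak\geq\mu'\beta/2$, this is bounded by $2C\alpha_m/\mu'=140L^2\cc_1\Gamma_1\alpha_m/(\lambda'\mu'^2)$. Part (b) follows from $\EE[\|z_m-\ystar\|^2]\leq T_m$.

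The main obstacle is the tight bookkeeping in part (a): choosing $D_1$ so that both $\beta_m/\alpha_m\leq\lambda'/(2\mu')$ and $c_m\leq 1$ hold simultaneously for every $m\geq 0$, and then squeezing all noise contributions into a single $\alpha_m\beta_m$ scaling with a clean absolute constant. The subtle point is that using the crude bound $c_m\leq 1$ (instead of the sharper identity $c_m\alpha_m=18L^2\beta_m/(\lambda'\mu')$) in the $\EE[\|U_m\|^2]$ coefficient is what prevents an $L^4/\lambda'^2$ blow-up in the noise constant, keeping the final coefficient proportional to $L^2/(\lambda'\mu')$.
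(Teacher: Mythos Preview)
Your proposal is correct and follows essentially the same approach as the paper: scale the $x$-recursion of Lemma~\ref{lemma:recursive} by $c_m$, add the $z$-recursion, use $c_{m+1}\leq c_m\leq 1$ together with $1-\lambda'\alpha_m/2\leq 1-\mu'\beta_m$ to extract $(1-\mu'\beta_m)T_m$, then telescope and bound the inhomogeneous sum (the paper invokes its auxiliary Lemma~\ref{lemma:aux} with $\qfrak=1+\afrak$, which is precisely your Riemann-sum estimate). One minor bookkeeping point: your crude bound $\beta_m^2\leq\alpha_m\beta_m$ alone yields a noise constant of $84$ rather than $70$; the paper uses the slightly sharper stepsize condition $18\beta_m\leq 4\alpha_m/\lambda'$ (another $K_2$-threshold requirement) to land exactly on $70$, but this does not affect the argument.
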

Note that solving the recursion gives us a bound on $T_m$. But we only care about the bound on $\EE\left[\|z_m-\ystar\|^2\right]$ as this bound is then substituted back in Lemma \ref{lemma:recursive} to get the required bound on $\EE\left[\|x_k-\xstar(z_k)\|^2\right]$.
\begin{lemma}\label{lemma:bound-x-1/k^a}
    Suppose the setting for Theorem \ref{thm:main-1/k^a} holds. Suppose $\EE\left[1+\|x_i\|^2+\|y_i\|^2\right]\leq \Gamma_1$ for all $i\leq k-1$ and some constant $\Gamma_1>0$. Then,
    \begin{align*}
    &\EE\left[\|x_k-\xstar(z_k)\|^2\right]\leq S_0\left(\frac{K_2}{k+K_2}+\frac{2}{\lambda'}\alpha_k\right)+\frac{8\cc_1\Gamma_1}{\lambda'}\alpha_k.
\end{align*}
\end{lemma}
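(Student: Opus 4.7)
The plan is to apply the first inequality of Lemma~\ref{lemma:recursive} to $u_m := \EE[\|x_m-\xstar(z_m)\|^2]$, substitute the already-established bounds on each right-hand side quantity, and then solve the resulting scalar recursion. Specifically, I would plug in $\EE[\|U_m\|^2]\leq 2\cc_1\Gamma_1\beta_m$ from Lemma~\ref{lemma:z_k_and_U_k}(c), $\EE[\|z_m-\ystar\|^2]\leq S_0 K_2/(m+K_2) + 140L^2\cc_1\Gamma_1\alpha_m/(\lambda'\mu'^2)$ from Lemma~\ref{lemma:recursive-T}(b), and $\EE[\|M_{m+1}\|^2]\leq \cc_1\Gamma_1$ using Assumption~\ref{assu:Martingale} together with the boundedness hypothesis $\EE[1+\|x_m\|^2+\|y_m\|^2]\leq \Gamma_1$. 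This produces a scalar recursion of the form
\begin{equation*}
u_{m+1} \leq (1-\lambda'\alpha_m)\,u_m + \frac{\mu'\beta_m}{2}\cdot\frac{S_0 K_2}{m+K_2} + R_m,
\end{equation*}
where the ``residual'' $R_m$ bundles the $6L^2\alpha_m\cdot 2\cc_1\Gamma_1\beta_m/\lambda'$ term from $U_m$, the $(\mu'\beta_m/2)\cdot 140L^2\cc_1\Gamma_1\alpha_m/(\lambda'\mu'^2)$ steady-state $z$-term, and the $3\cc_1\Gamma_1\alpha_m^2$ martingale term.

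Under Assumption~\ref{assu:stepsize-1/k^a}, $\beta_m/\alpha_m = (\beta/\alpha)(m+K_2)^{\afrak-1}\to 0$, so once $K_2\geq D_1$ is large enough to force $\beta_m\leq\alpha_m$ uniformly in $m$, every summand inside $R_m$ can be bounded by a constant multiple of $\cc_1\Gamma_1\alpha_m^2$. I would then prove by induction on $m$ that
\begin{equation*}
u_m \leq V_m := S_0\,\frac{K_2}{m+K_2} + C_v\,\alpha_m, \qquad C_v := \frac{2S_0}{\lambda'} + \frac{8\cc_1\Gamma_1}{\lambda'}.
\end{equation*}
The base case follows from $u_0=\|x_0-\xstar(y_0)\|^2\leq S_0\leq V_0$. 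For the inductive step it suffices to verify $F_m \leq V_{m+1}-(1-\lambda'\alpha_m)V_m$, where $F_m := (\mu'\beta_m/2)\cdot S_0K_2/(m+K_2) + R_m$ and
\begin{equation*}
V_{m+1}-(1-\lambda'\alpha_m)V_m = \lambda'\alpha_m\frac{S_0 K_2}{m+K_2} + \lambda' C_v\alpha_m^2 - \frac{S_0 K_2}{(m+K_2)(m+1+K_2)} + C_v(\alpha_{m+1}-\alpha_m).
\end{equation*}

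The main obstacle is calibrating $D_1$ so that this inductive inequality closes uniformly in $m$. The two positive ``contraction-savings'' terms on the right are of orders $(m+K_2)^{-1-\afrak}$ and $\alpha_m^2 \sim (m+K_2)^{-2\afrak}$, while the forcing $\mu'\beta_m S_0 K_2/(2(m+K_2))$ and the negative discrete-derivative $-S_0K_2/((m+K_2)(m+1+K_2))$ are both of order $(m+K_2)^{-2}$; since $\afrak<1$, the savings dominate these once $K_2^{1-\afrak}$ exceeds an explicit constant in $\mu',\beta,\lambda',\alpha$. The value $C_v$ is chosen precisely so that $\lambda' C_v\alpha_m^2$ dominates the $R_m\lesssim C_R\cc_1\Gamma_1\alpha_m^2$ contribution with enough slack to absorb the mean-value loss $C_v(\alpha_m-\alpha_{m+1})\leq \afrak C_v\alpha/(m+K_2)^{1+\afrak}$. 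Collecting these two threshold requirements on $K_2$ yields the explicit constant $D_1$ asserted in Assumption~\ref{assu:stepsize-1/k^a}, and the induction gives the claimed bound at $m=k$.
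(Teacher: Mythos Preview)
Your proposal is correct and follows essentially the same route as the paper: plug the bounds from Lemma~\ref{lemma:recursive-T}(b), Lemma~\ref{lemma:z_k_and_U_k}(c), and Assumption~\ref{assu:Martingale} into the first inequality of Lemma~\ref{lemma:recursive}, reduce the forcing to order $\alpha_m^2$, and solve the resulting scalar recursion. The only difference is bookkeeping: the paper absorbs the term $\tfrac{\mu'\beta_m}{2}\tfrac{S_0K_2}{m+K_2}$ into $S_0\alpha_m^2$ and then unrolls the recursion via a summation lemma, whereas you keep that term separate and close the recursion by direct induction with the ansatz $V_m$; both yield the stated constants.
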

Having obtained bounds on $\EE\left[\|x_k-\xstar(z_k)\|^2\right]$ and $\EE\left[\|z_k-\ystar\|^2\right]$, we next present an analogue of Lemma \ref{lemma:final-1/k} for the setting of Theorem \ref{thm:main-1/k^a}.
\begin{lemma}\label{lemma:final-1/k^a}
        Suppose the setting for Theorem \ref{thm:main-1/k^a} holds. Define 
$$\Gamma_3\coloneqq2+12(4L_0^2+2)S_0+8\|\xstar\|^2+4\|\ystar\|^2.$$
If $\EE\left[1+\|x_i\|^2+\|y_i\|^2\right]\leq \Gamma_3$ for all $i\leq k-1$, then
\begin{enumerate}[label=(\alph*)]
    \item $\EE\left[\|x_k-\xstar(y_k)\|^2+\|y_k-\ystar\|^2\right]\leq \frac{D_4}{(k+K_2)^\afrak}.$
    \item $\EE\left[1+\|x_k\|^2+\|y_k\|^2\right]\leq \Gamma_3.$
\end{enumerate}
\end{lemma}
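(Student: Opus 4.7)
The approach is the same induction-on-second-moments template as Lemma \ref{lemma:final-1/k}: assuming $\EE[1+\|x_i\|^2+\|y_i\|^2]\leq\Gamma_3$ for all $i\leq k-1$, I would feed $\Gamma_1=\Gamma_3$ into the already-established bounds from Lemmas \ref{lemma:z_k_and_U_k}, \ref{lemma:recursive-T}, and \ref{lemma:bound-x-1/k^a}, then (a) combine them to get an $\mathcal{O}(1/(k+K_2)^\afrak)$ bound on the error at time $k$, and (b) convert that error bound into an iterate-norm bound that stays under $\Gamma_3$.

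For part (a), I would begin with Lemma \ref{lemma:z_k_and_U_k}(a),
$$\EE[\|x_k-\xstar(y_k)\|^2+\|y_k-\ystar\|^2]\leq 2\EE[\|x_k-\xstar(z_k)\|^2+\|z_k-\ystar\|^2]+2(1+L_0^2)\EE[\|U_k\|^2],$$
and substitute the three ingredients supplied by the inductive hypothesis: $\EE[\|U_k\|^2]\leq 2\cc_1\Gamma_3\beta_k$ from Lemma \ref{lemma:z_k_and_U_k}(c), the bound on $\EE[\|x_k-\xstar(z_k)\|^2]$ from Lemma \ref{lemma:bound-x-1/k^a}, and the bound on $\EE[\|z_k-\ystar\|^2]$ from Lemma \ref{lemma:recursive-T}(b). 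Every resulting term is a positive multiple of $K_2/(k+K_2)$, $\alpha_k$, or $\beta_k$. Since $\afrak\in(0.5,1)$, the elementary inequality $K_2/(k+K_2)\leq K_2^\afrak/(k+K_2)^\afrak$ holds, and for $K_2\geq D_1$ one also has $\beta_k\leq\alpha_k$, so every contribution collapses to a multiple of $\alpha_k=\alpha/(k+K_2)^\afrak$. Collecting the multiplicative constants yields $D_4$.

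For part (b), I would apply the triangle inequality and the Lipschitz property of $\xstar(\cdot)$ (Lemma \ref{lemma:xstar-lip}):
$$\EE[1+\|x_k\|^2+\|y_k\|^2]\leq 1+2\|\xstar\|^2+2\|\ystar\|^2+(4L_0^2+4)\,\EE[\|x_k-\xstar(y_k)\|^2+\|y_k-\ystar\|^2].$$
Plugging the bound from (a) and comparing to $\Gamma_3=2+12(4L_0^2+2)S_0+8\|\xstar\|^2+4\|\ystar\|^2$, the claim reduces to the scalar inequality $(4L_0^2+4)D_4/K_2^\afrak\leq 1+12(4L_0^2+2)S_0+6\|\xstar\|^2+2\|\ystar\|^2$, which one enforces by taking $D_1$ (and hence $K_2$) large enough.

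The main obstacle is this final bookkeeping step, which is self-referential: the constant $D_4$ produced in (a) depends linearly on $\Gamma_3$ through the contributions $8\cc_1\Gamma_3\alpha_k/\lambda'$ and $140L^2\cc_1\Gamma_3\alpha_k/(\lambda'\mu'^2)$ inherited from Lemmas \ref{lemma:bound-x-1/k^a} and \ref{lemma:recursive-T}(b), while the bound obtained in (b) must not exceed $\Gamma_3$ itself. The factor $12$ in the definition of $\Gamma_3$ (rather than $4$ as in Lemma \ref{lemma:final-1/k}) is precisely what creates the slack needed to absorb the extra $S_0\cdot 2\alpha_k/\lambda'$ term appearing in Lemma \ref{lemma:bound-x-1/k^a}; the remaining $\Gamma_3$-linear terms are $\alpha_k$-small and can be dominated by choosing $D_1$ large, closing the inductive loop.
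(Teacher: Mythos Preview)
Your proposal is correct and follows the same template as the paper's proof. The one minor operational difference is in part (b): the paper does not literally plug the constant $D_4$ from (a) into the iterate-norm bound and then control $D_4/K_2^{\afrak}$. Instead it returns to the pre-collapsed form of the error bound and handles the two kinds of terms separately---the $S_0$-terms are bounded using $K_2/(k+K_2)\leq 1$ and $2\alpha_k/\lambda'\leq 1$ to produce exactly $(4L_0^2+2)\cdot 6S_0$, while the $\Gamma_3$-linear terms are bounded by $0.5\,\Gamma_3$ for $K_2$ large, yielding $1+6(4L_0^2+2)S_0+4\|\xstar\|^2+2\|\ystar\|^2+0.5\,\Gamma_3\leq\Gamma_3$. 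Your route through $D_4/K_2^{\afrak}$ is equivalent once you track (as you do in your last paragraph) that $D_4$ carries a term of order $S_0 K_2^{\afrak}$ that does not vanish as $K_2\to\infty$; the paper's separation simply makes that bookkeeping more transparent.
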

Similar to Lemma \ref{lemma:final-1/k}, the above lemma is sufficient to show that the iterates are bounded at all time $k$. This also gives us the required bound on mean square error. This has been formally shown in Appendix \ref{app:proof-main-1/k^a}.

\section{Conclusion and Future Directions}\label{sec:conc}
In this paper, we obtain a finite-time bound of $\mathcal{O}(1/k)$ for non-linear two-time-scale stochastic approximation under contractive assumptions. This bound is obtained by choosing $\alpha_k$ and $\beta_k$ as $\Theta(1/k)$, and does not require any additional assumptions on the iterations. Additionally, a more robust bound of $\mathcal{O}(1/k^{1-\epsilon})$ is also obtained by choosing $\alpha_k=\mathcal{O}(1/k^{1-\epsilon})$ and $\beta_k=\mathcal{O}(1/k)$. 

Under the assumption of local linearity, \cite{Han-linearity} shows that a bound of $\mathcal{O}(1/k)$ is possible for $\EE[\|y_k-\ystar\|^2]$ when $\alpha_k=\mathcal{O}(1/k^\afrak)$ and $\beta_k=\mathcal{O}(1/k)$ for $\afrak<1$. It remains an interesting open problem whether this is possible for non-linear two-time-scale iterations without any additional assumptions. In fact, \cite[Section 4.1]{Han-linearity} provides empirical evidence suggesting that this is not possible in the absence of local linearity, and the best possible rate is $\mathcal{O}(1/k^\afrak)$ which we show in Theorem \ref{thm:main-1/k^a}. We believe our work is a step towards obtaining improved bounds for non-linear two-time-scale SA in the general setting.

An interesting but straightforward future direction is extending our work to two-time-scale SA with Markov noise and arbitrary norm contractions. A bound of $\mathcal{O}(1/k^{2/3})$ was previously obtained in this setting in \cite{Chandak-TTS-CDC}. Our results (Theorem 1 and 2) can be generalized to this broader framework, with only the constants in the bounds requiring modification. The extension amounts to combining our noise averaging technique with the tools used in \cite{Chandak-TTS-CDC} to handle Markov noise and arbitrary norm contractions. For completeness, we briefly outline how these tools integrate with our proof technique (see Subsections IV-A and IV-B of \cite{Chandak-TTS-CDC} for a detailed exposition).

\textbf{Solutions of Poisson Equation:}\quad The method to handle Markov noise relies on the solutions of Poisson equation \cite{poisson}. Using the Poisson equation, the Markov noise can be decomposed into a martingale difference component and a telescoping series. The averaged noise $U_k$ in this case will be the weighted average of both the martingale and Markov noise. The martingale difference component from the Markov noise can directly be combined with the already-existing martingale noise $M_{k+1}$ and $M'_{k+1}$. Therefore the only term which needs to be handled separately is the telescoping series, and it can be shown to decay at the same rate as the other terms.

\textbf{Moreau Envelopes:}\quad The key challenge when working with arbitrary norms is to construct smooth Lyapunov functions which provide negative drift. The squared norm, which acts as the Lyapunov function in this work, is not smooth for norms such as $\ell_\infty$. This challenge can be solved by using Moreau envelopes which give a smooth Lyapunov function with negative drift. This tool is now standard in analysis of SA with arbitrary norms \cite{Zaiwei, Chandak-TTS-CDC}. The remaining proof technique remains the same.

\appendix
\section{Proofs from Section \ref{sec:outline}}\label{app:proof-lemmas}
\subsection{Proof for Lemma \ref{lemma:xstar-lip}}
Note that 
    \begin{align*}
        \|\xstar(y_1)-\xstar(y_2)\|&=\|f(\xstar(y_1),y_1)-f(\xstar(y_2),y_2)\|\\
        &\leq \|f(\xstar(y_1),y_1)-f(\xstar(y_2),y_1)\|+\|f(\xstar(y_2),y_1)-f(\xstar(y_2),y_2)\|\\
        &\leq \lambda\|\xstar(y_1)-\xstar(y_2)\|+L\|y_1-y_2\|. 
    \end{align*}
    Here the first equality follows from the fact that $\xstar(y)$ is a fixed point for $f(\cdot,y)$ and the last inequality follows from contractiveness and Lipschitzness of $f(x,y)$ in $x$ and $y$, respectively. This implies that 
    $(1-\lambda)\|\xstar(y_1)-\xstar(y_2)\|\leq L\|y_1-y_2\|.$
    Hence $\xstar(\cdot)$ is Lipschitz with constant $L_0\coloneqq L/\lambda'$, where $\lambda'=1-\lambda$. 

\subsection{Proof for Lemma \ref{lemma:z_k_and_U_k}}
Recall that $z_k=y_k-U_k$. Then,
\begin{align*}
    \|x_k-\xstar(y_k)\|^2&\leq 2\|x_k-\xstar(z_k)\|^2+2\|\xstar(y_k)-\xstar(z_k)\|^2\\
    &\leq 2\|x_k-\xstar(z_k)\|^2+2L_0^2\|U_k\|^2,
\end{align*}
where the second inequality follows from the Lipschitz nature of $\xstar(\cdot)$. Next,
\begin{align*}
    \|y_k-\ystar\|^2&\leq 2\|z_k-\ystar\|^2+2\|z_k-y_k\|^2\\
    &\leq 2\|z_k-\ystar\|^2+2\|U_k\|^2.
\end{align*}
Taking expectation and adding the above two bounds completes the proof for part (a) of Lemma \ref{lemma:z_k_and_U_k}.

Recall that $U_{k+1}=(1-\beta_k)U_k+\beta_kM'_{k+1}$. Substituting $y_k=z_k+U_k$ in iteration for $y_k$ \eqref{iter-main}, we get  
\begin{align*}
    &z_{k+1}+U_{k+1}=(1-\beta_k)(z_k+U_k)+\beta_k(g(x_k,y_k)+M'_{k+1})\\
    &\implies z_{k+1}=(1-\beta_k)z_k+\beta_kg(x_k,y_k).
\end{align*}
Now, we rewrite the iteration in \eqref{iter-main} as follows.
\begin{equation*}
    \begin{aligned}
&x_{k+1}=x_k+\alpha_k (f(x_k,z_k)-x_k+M_{k+1}+d_k)\\
&z_{k+1}=z_k+\beta_k (g(x_k,z_k)-z_k+e_k),
\end{aligned}
\end{equation*}
where $d_k=f(x_k,y_k)-f(x_k,z_k)$ and $e_k=g(x_k,y_k)-g(x_k,z_k)$. We note that both $\|d_k\|^2$ and $\|e_k\|^2$ are upper bounded by $L^2\|U_k\|^2$. This completes the proof for part (b).

For the third part, note that for $m\geq 1$,
$$U_m=\sum_{i=0}^{m-1}\beta_i\prod_{j=i+1}^{m-1}(1-\beta_j)M'_{i+1}.$$
Suppose $\EE\left[1+\|x_i\|^2+\|y_i\|^2\right]\leq \Gamma_1$ for all $0\leq i\leq k-1$. Then, $\EE\left[\|M'_{i+1}\|^2\right]\leq \cc_1\Gamma_1$ for all $i\leq k-1$ (Assumption \ref{assu:Martingale}).  

Using the property that the terms of a martingale difference sequence are orthogonal, for all $m\leq k$,
\begin{align*}
    \EE\left[\|U_m\|^2\right]&\leq \sum_{i=0}^{m-1}\left(\beta_i\prod_{j=i+1}^{m-1}(1-\beta_j)\right)^2\EE\left[\|M'_{i+1}\|^2\right]\\
    &\leq \cc_1\Gamma_1\sum_{i=0}^{m-1}\beta_i^2\prod_{j=i+1}^{m-1}(1-\beta_j).
\end{align*}
We apply Lemma \ref{lemma:aux} with $\epsilon=\beta^2$, $\qfrak=2$, $\pfrak=1$ and $K=K_1$. Then if $\beta\geq 2$, 
$$\sum_{i=0}^{m-1}\beta_i^2\prod_{j=i+1}^{m-1}(1-\beta_j)\leq 2\beta_m.$$
This implies that $\EE\left[\|U_m\|^2\right]\leq 2\cc_1\Gamma_1\beta_m$ for all $m\leq k$, completing the proof for Lemma \ref{lemma:z_k_and_U_k}.

\subsection{Proof for Lemma \ref{lemma:recursive}}
Throughout this proof, $h(\cdot)$ refers to $g(\xstar(\cdot),\cdot)$. Using Assumption \ref{assu:g-contract}, $h(\cdot)$ is $\mu$-contractive and $\ystar$ is a fixed point for the function $h(\cdot)$. We also define $\lambda'=1-\lambda$ and $\mu'=1-\mu$. All following statements hold true for all $m\geq 0$. 

\subsubsection{Recursive Bound on $\EE\left[\|x_{m+1}-\xstar(z_{m+1})\|^2\right]$}

By adding and subtracting $\xstar(z_m)$ to the iteration for $x_m$ \eqref{iter-changed}, we get
    \begin{subequations}\label{split1}
        \begin{align}
            \|x_{m+1}-\xstar(z_{m+1})\|^2&=\|x_m-\xstar(z_m)+\alpha_m(f(x_m,z_m)-x_m)\|^2\label{split11}\\
            &\;\;+2\Big\langle x_m-\xstar(z_m)+\alpha_m(f(x_m,z_m)-x_m),\nonumber\\
            &\;\;\;\;\;\;\;\;\;\alpha_mM_{m+1}+\alpha_md_m+\xstar(z_m)-\xstar(z_{m+1})\Big\rangle \label{split12}\\
            &\;\;+\|\alpha_mM_{m+1}+\alpha_md_m+\xstar(z_m)-\xstar(z_{m+1})\|^2.\label{split13}
        \end{align}
    \end{subequations}
 We deal with the three terms \eqref{split11}-\eqref{split13} separately as follows.
    \textbf{Term \ref{split11} --- } 
    \begin{align*}
        \|x_m-\xstar(z_m)+\alpha_m(f(x_m,z_m)-x_m)\|^2
        &=\|(1-\alpha_m)(x_m-\xstar(z_m))\\
        &\;\;\;\;\;\;\;\;\;\;\;+\alpha_m(f(x_m,z_m)-f(\xstar(z_m),z_m))\|^2\nonumber\\
        &\leq \big((1-\alpha_m)\|x_m-\xstar(z_m)\|+\alpha_m\lambda\|x_m-\xstar(z_m)\|\big)^2\nonumber\\
        &\leq (1-\lambda'\alpha_m)^2\|x_m-\xstar(z_m)\|^2.
        \end{align*}
 Here, the first equality follows from the fact that $f(\xstar(z_m),z_m)=\xstar(z_m)$, and the first inequality follows from the property that $f(\cdot,y)$ is contractive. For sufficiently large $K_1$ and $K_2$, we have $\lambda'^2\alpha_m^2\leq (\lambda'/4)\alpha_m$ and hence $(1-\lambda'\alpha_m)^2\leq 1-(7\lambda'/4)\alpha_m$. This implies
 \begin{align}\label{split11-simp}
     &\EE\left[\|x_m-\xstar(z_m)+\alpha_m(f(x_m,z_m)-x_m)\|^2\right]\nonumber\\
     &\leq \left(1-\frac{7\lambda'\alpha_m}{4}\right)\EE\left[\|x_m-\xstar(z_m)\|^2\right].
 \end{align}

 \textbf{Term \ref{split12} --- } We first note that $M_{m+1}$ is a martingale difference sequence, which implies that 
 $$\EE\left[\Big\langle x_m-\xstar(z_m)+\alpha_m(f(x_m,z_m)-x_m),M_{m+1}\Big \rangle \mid\FF_m \right]=0. $$
 Now, we note that 
\begin{subequations}\label{split12-split}
\begin{align}
    &2\Big\langle x_m-\xstar(z_m)+\alpha_m(f(x_m,z_m)-x_m),\alpha_md_m+\xstar(z_m)-\xstar(z_{m+1})\Big\rangle\nonumber\\
            &\stackrel{(a)}{\leq} 2(1-\lambda'\alpha_m)\|x_m-\xstar(z_m)\|\|\alpha_md_m+\xstar(z_m)-\xstar(z_{m+1})\|\nonumber\\
            &\stackrel{(b)}{\leq} 2\alpha_m\|x_m-\xstar(z_m)\|\|d_m\|\label{split12-split1}\\
            &\;\;+2L_0\|x_m-\xstar(z_m)\|\|z_{m+1}-z_m\|\label{split12-split2}.
\end{align}
\end{subequations}
For inequality (a), we use Cauchy-Schwarz inequality and the simplification from the bound for term \eqref{split11}. For inequality (b), we use triangle inequality along with Lemma \ref{lemma:xstar-lip}. Additionally we use the fact that $1-\lambda'\alpha_m\leq1$ for all $k$. We deal with the two terms above separately.

\textbf{Term \ref{split12-split1} --- }  Applying AM-GM inequality gives us the following bound.
\begin{align*}
    &2\alpha_m\|x_m-\xstar(z_m)\|\|d_m\|\leq \frac{\lambda'\alpha_m}{4}\|x_m-\xstar(z_m)\|^2+\frac{4\alpha_m}{\lambda'}\|d_m\|^2.
\end{align*}

\textbf{Term \ref{split12-split2} --- } We first recall that $h(\cdot)=g(\xstar(\cdot),\cdot)$.
\begin{align*}
    \|z_{m+1}-z_m\|&= \beta_m\|g(x_m,z_m)-z_m+e_m\|\\
    &\leq \beta_m\big(\|g(x_m,z_m)-h(z_m)\|+\|h(z_m)-h(\ystar)\|+\|h(\ystar)-z_m\|+\|e_m\|\big) \\
    &\leq \beta_m\big(L\|x_m-\xstar(z_m)\|+(1+\mu)\|z_m-\ystar\|+\|e_m\|\big).
\end{align*}
Here, the first inequality follows from triangle inequality after adding and subtracting the terms $h(z_m)$ and $h(\ystar)$. The second inequality follows from the $L$-Lipschitz nature of $g(\cdot,\cdot)$, $\mu$-contractive nature of $h(\cdot)$ and the fact that $h(\ystar)=\ystar$. Now, we use AM-GM inequality twice to get
\begin{align*}
    &2\beta_mL_0(1+\mu)\|x_m-\xstar(z_m)\|\|z_m-\ystar\|\leq \frac{4L_0^2(1+\mu)^2}{\mu'}\beta_m\|x_m-\xstar(z_m)\|^2+\frac{\mu'\beta_m}{4}\|z_m-\ystar\|^2,
\end{align*}
and 
\begin{align*}
    &2\beta_mL_0\|x_m-\xstar(z_m)\|\|e_m\|\leq \beta_m\|x_m-\xstar(z_m)\|^2+L_0^2\beta_m\|e_m\|^2.
\end{align*}
We further use the bound $1+\mu\leq 2$ and combine the above bounds to get the following bound for term \eqref{split12-split2}.
\begin{align*}
    2L_0\|x_m-\xstar(z_m)\|\|z_{m+1}-z_m\|&\leq \left(2LL_0+16L_0^2/\mu'+1\right)\beta_m\|x_m-\xstar(z_m)\|^2\\
    &\;\;+(\mu'/4)\beta_m\|z_m-\ystar\|^2+L_0^2\beta_m\|e_m\|^2.
\end{align*}
Combining this with the bound for \eqref{split12-split1} gives us the following bound.
\begin{align*}
    &2\Big\langle x_m-\xstar(z_m)+\alpha_m(f(x_m,z_m)-x_m),\alpha_md_m+\xstar(z_m)-\xstar(z_{m+1})\Big\rangle\nonumber\\
    &\leq \left(\left(2LL_0+16L_0^2/\mu'+1\right)\beta_m+(\lambda'/4)\alpha_m\right)\|x_m-\xstar(z_m)\|^2\\
    &\;\;+(\mu'/4)\beta_m\|z_m-\ystar\|^2+(4\alpha_m/\lambda')\|d_m\|^2+L_0^2\beta_m\|e_m\|^2.
\end{align*}
For sufficiently small $\beta/\alpha$ under the setting of Theorem \ref{thm:main-1/k} and for sufficiently large $K_2$ under the setting of Theorem \ref{thm:main-1/k^a}, we have $(2LL_0+16L_0^2/\mu'+1)\beta_m\leq (\lambda'/4)\alpha_m$. Finally this gives us the following bound on the expectation of \eqref{split12}.
\begin{align}\label{split12-simp}
    &2\EE\Big[\Big\langle x_m-\xstar(z_m)+\alpha_m(f(x_m,z_m)-x_m),\nonumber\\
    &\;\;\;\;\;\;\;\;\;\;\;\;\;\;\;\;\;\;\alpha_mM_{m+1}+\alpha_md_m+\xstar(z_m)-\xstar(z_{m+1})\Big\rangle\Big] \nonumber\\
    &\leq \frac{\lambda'\alpha_m}{2}\EE\left[\|x_m-\xstar(z_m)\|^2\right]+\frac{\mu'\beta_m}{4}\EE\left[\|z_m-\ystar\|^2\right]\nonumber\\
    &\;\;+\frac{4\alpha_m}{\lambda'}\EE[\|d_m\|^2]+L_0^2\beta_m\EE[\|e_m\|^2].
\end{align}

\textbf{Term \ref{split13} --- } Note that
        \begin{align*}
            &\|\alpha_mM_{m+1}+\alpha_md_m+\xstar(z_m)-\xstar(z_{m+1})\|^2\leq 3\alpha_m^2\|M_{m+1}\|^2+3\alpha_m^2\|d_m\|^2+3\|\xstar(z_{m+1})-\xstar(z_m)\|^2\nonumber.
        \end{align*}
        Similar to the simplification for the term \eqref{split12-split2},
        \begin{align*}
            3\|\xstar(z_{m+1})-\xstar(z_m)\|^2&\leq 3L_0^2\beta_m^2\|g(x_m,z_m)-z_m+e_m\|^2\\
            &\leq 9L_0^2\beta_m^2\left(L^2\|x_m-\xstar(z_m)\|^2+4\|z_m-\ystar\|^2+\|e_m\|^2\right).
        \end{align*}
        For sufficiently small $\beta/\alpha$ and sufficiently large $K_1$ in the setting of Theorem \ref{thm:main-1/k} and sufficiently large $K_2$ in the setting of Theorem \ref{thm:main-1/k^a}, we have
        \begin{itemize}
            \item $9L_0^2L^2\beta_m^2\leq (\lambda'/4)\beta_m\leq (\lambda'/4)\alpha_m$,
            \item $36L_0^2\beta_m^2\leq (\mu'/4)\beta_m$,
            \item $3\alpha_m^2\leq \alpha_m/\lambda'$,
            \item $9L_0^2\beta_m^2\leq L_0^2\beta_m$.
        \end{itemize}
        This gives us the following bound on the expectation of term \eqref{split13}.
        \begin{align}\label{split13-simp}
            &\EE\left[\|\alpha_mM_{m+1}+\alpha_md_m+\xstar(z_m)-\xstar(z_{m+1})\|^2\right]\nonumber\\
            &\leq \frac{\lambda'\alpha_m}{4}\EE\left[\|x_m-\xstar(z_m)\|^2\right]+\frac{\mu'\beta_m}{4}\EE\left[\|z_m-\ystar\|^2\right]\nonumber
            \\
            &\;\;+3\alpha_m^2\EE\left[\|M_{m+1}\|^2\right]+\frac{\alpha_m}{\lambda'}\EE\left[\|d_m\|^2\right]+L_0^2\beta_m\EE\left[\|e_m\|^2\right].
        \end{align}

Combining the bounds \eqref{split11-simp}, \eqref{split12-simp}, \eqref{split13-simp}, we get the following bound on $\EE[\|x_{m+1}-\xstar(z_{m+1})\|^2]$.
\begin{align*}
    \EE\left[\|x_{m+1}-\xstar(z_{m+1})\|^2\right]&\leq 3\alpha_m^2\EE\left[\|M_{m+1}\|^2\right]\nonumber\\
    &\;\;+\left(1-\lambda'\alpha_m\right)\EE\left[\|x_m-\xstar(z_m)\|^2\right]+\frac{5\alpha_m}{\lambda'}\EE\left[\|d_m\|^2\right]\nonumber\\
    &\;\;+\frac{\mu'\beta_m}{2}\EE\left[\|z_m-\ystar\|^2\right]+2L_0^2\beta_m\EE\left[\|e_m\|^2\right].
\end{align*}
Now, recall that $\|d_m\|^2$ and $\|e_m\|^2$ are upper bounded by $L^2\|U_m\|^2$. For sufficiently small $\beta/\alpha$ in the setting of Theorem \ref{thm:main-1/k} and sufficiently large $K_2$ in the setting of Theorem \ref{thm:main-1/k^a}, we have $2L_0^2\beta_m\leq \alpha_m/\lambda'$. This gives us the required intermediate recursive bound on $\EE[\|x_{m+1}-\xstar(z_{m+1})\|^2]$, completing the proof for part a). 
\begin{align}\label{inter-x}
    &\EE\left[\|x_{m+1}-\xstar(z_{m+1})\|^2\right]\nonumber\\
    &\leq\left(1-\lambda'\alpha_m\right)\EE\left[\|x_m-\xstar(z_m)\|^2\right]+\frac{6L^2\alpha_m}{\lambda'}\EE\left[\|U_m\|^2\right]+\frac{\mu'\beta_m}{2}\EE\left[\|z_m-\ystar\|^2\right]+3\alpha_m^2\EE\left[\|M_{m+1}\|^2\right].
\end{align}
\subsubsection{Recursive Bound on $\EE\left[\|z_{m+1}-\ystar\|^2\right]$}
Recall that $h(\cdot)=g(\xstar(\cdot),\cdot)$. Then, 
\begin{align*}
    \|z_{m+1}-\ystar\|&=\|(1-\beta_m)(z_m-\ystar)+\beta_m(g(x_m,z_m)-\ystar+e_m)\|\\
    &\leq(1-\beta_m)\|(z_m-\ystar)\|+\beta_m\|h(z_m)-h(\ystar)\|+\beta_m\|g(x_m,z_m)-h(z_m)\|+\beta_m\|e_m\|\\
    &\leq (1-\mu'\beta_m)\|z_m-\ystar\|+\beta_m L\left(\|x_m-\xstar(z_m)\|+\|U_m\|\right).
\end{align*}
Here the first inequality follows from triangle inequality and the fact that $\ystar$ is a fixed point for $h(\cdot)$. The second inequality follows from $\mu$-contractive nature of $h(\cdot)$ and $L$-Lipschitz nature of $g(\cdot)$. Now, squaring both sides, we get
\begin{align*}
    \|z_{m+1}-\ystar\|^2&\leq (1-\mu'\beta_m)^2\|z_m-\ystar\|^2+L^2\beta_m^2(\|x_m-\xstar(z_m)\|+\|U_m\|)^2\\
    &\;\;+ 2(1-\mu'\beta_m)L\beta_m\|z_m-\ystar\|(\|x_m-\xstar(z_m)\|+\|U_m\|).
\end{align*}
For the second term, note that 
$$(\|x_m-\xstar(z_m)\|+\|U_m\|)^2\leq 2\|x_m-\xstar(z_m)\|^2+2\|U_m\|^2.$$
For the third term, we first note that $1-\mu'\beta_m\leq 1$. Then,
\begin{align*}
    &2\beta_mL\|z_m-\ystar\|(\|x_m-\xstar(z_m)\|+\|U_m\|)\\
    &\leq \frac{\mu'\beta_m}{4}\|z_m-\ystar\|^2+\frac{4\beta_mL^2}{\mu'}(\|x_m-\xstar(z_m)\|+\|U_m\|)^2\\
    &\leq \frac{\mu'\beta_m}{4}\|z_m-\ystar\|^2+\frac{8\beta_mL^2}{\mu'}\left(\|x_m-\xstar(z_m)\|^2+\|U_m\|^2\right).
\end{align*}

For sufficiently large $K_1$ and $K_2$, we have
\begin{itemize}
    \item $\mu'^2\beta_m^2\leq \mu'\beta_m/4\implies (1-\mu'\beta_m)^2\leq 1-(7\mu'/4)\beta_m$,
    \item $2\beta_m^2\leq \beta_m/\mu'$,
\end{itemize}
This gives us the following intermediate recursive bound on $\EE[\|z_{m+1}-\ystar\|^2]$.
\begin{align}\label{inter-z}
    &\EE\left[\|z_{m+1}-\ystar\|^2\right]\leq \left(1-\frac{3\mu'\beta_m}{2}\right)\EE\left[\|z_m-\ystar\|^2\right]+\frac{9\beta_mL^2}{\mu'}\EE\left[\|x_m-\xstar(z_m)\|^2\right]+\frac{9\beta_mL^2}{\mu'}\EE\left[\|U_m\|^2\right].
\end{align}
\subsection{Proof for Lemma \ref{lemma:recursive-S}}
\subsubsection{(a) Recursive Bound on $S_{m+1}$}
Adding the two intermediate bounds \eqref{inter-x} and \eqref{inter-z}, we get
\begin{align*}
    S_{m+1}&\leq \left(1-\lambda'\alpha_m+\frac{9\beta_mL^2}{\mu'}\right)\EE\left[\|x_m-\xstar(z_m)\|^2\right]\\
    &\;\;+(1-\mu'\beta_m)\EE\left[\|z_m-\ystar\|^2\right]+3\alpha_m^2\EE\left[\|M_{m+1}\|^2\right]\\
    &\;\;+\left(\frac{6L^2\alpha_m}{\lambda'}+\frac{9\beta_mL^2}{\mu'}\right)\EE\left[\|U_m\|^2\right].
\end{align*}
For sufficiently small $\beta/\alpha$, we have 
\begin{itemize}
    \item $1-\lambda'\alpha_m+(9\beta_mL^2/\mu')\leq 1-\mu'\beta_m$,
    \item $9\beta_m/\mu'\leq \alpha_m/\lambda'$.
\end{itemize}
This gives us
\begin{align*}
    S_{m+1}&\leq (1-\mu'\beta_m)S_m\nonumber\\
    &\;\;+3\alpha_m^2\EE\left[\|M_{m+1}\|^2\right]+\frac{7L^2\alpha_m}{\lambda'}\EE\left[\|U_m\|^2\right].
\end{align*}

Now, suppose that $\EE\left[1+\|x_i\|^2+\|y_i\|^2\right]\leq \Gamma_1$ for all $0\leq i\leq k-1$. Then $\EE\left[\|M_{m+1}\|^2\right]\leq \cc_1\Gamma_1$, and $\EE\left[\|U_m\|^2\right]\leq 2\cc_1\Gamma_1\beta_m$ for all $m\leq k-1$.
This gives us
\begin{align*}
    &S_{m+1}\leq (1-\mu'\beta_m)S_m+3\alpha_m^2\cc_1\Gamma_1+\frac{14L^2\cc_1}{\lambda'}\Gamma_1\alpha_m\beta_m,
\end{align*}
for all $m\leq k-1$.
For sufficiently small $\beta/\alpha$, we have $(14L^2/\lambda')\beta_m\leq \alpha_m$. This implies that for all $m\leq k-1$,
\begin{align}\label{inter-x-z}
    &S_{m+1}\leq (1-\mu'\beta_m)S_m+4\cc_1\Gamma_1\alpha_m^2.
\end{align}

\subsubsection{(b) Bound on $S_k$}
For the case of $\alpha_k=\alpha/(k+K_1)$ and $\beta_m=\beta/(k+K_1)$, define $\gamma=\beta/\alpha$. Then $\alpha_m^2=\beta_m^2/\gamma^2$. Then,
\begin{align*}
    &S_{m+1}\leq (1-\mu'\beta_m)S_m+(4\cc_1\Gamma_1/\gamma^2)\beta_m^2.
\end{align*}
Recall that $z_0=y_0$. Iterating from $i=0$ to $k$, we get
\begin{align*}
    S_k&\leq S_0\prod_{j=0}^{k-1}(1-\mu'\beta_j)\\
    &\;\;+(4\cc_1\Gamma_1/\gamma^2)\sum_{i=0}^{k-1}\beta_i^2\sum_{j=i+1}^{k-1}(1-\mu'\beta_j).
\end{align*}
Using Corollary 2.1.2 from \cite{Zaiwei}, for $\beta> 2/\mu'$, we have
$$\prod_{j=0}^{k-1}(1-\mu'\beta_j)\leq \frac{K_1}{k+K_1}.$$
Using Lemma \ref{lemma:aux}, with $\epsilon=\beta^2$, $\qfrak=2$, $\pfrak=\mu'$, and $K=K_1$ we have for $\beta\geq 2/\mu'$
$$\sum_{i=0}^{k-1}\beta_i^2\sum_{j=i+1}^{k-1}(1-\mu'\beta_j)\leq \frac{2}{\mu'}\beta_k.$$
This finally gives us the following bound.
\begin{align}\label{1/k-almost-done}
    &S_k\leq S_0\left(\frac{K_1}{k+K_1}\right)+\frac{8\cc_1\Gamma_1}{\mu'\gamma^2}\frac{\beta}{k+K_1}.
\end{align}

\subsection{Proof for Lemma \ref{lemma:final-1/k}}
We first define \begin{align*}
    \Gamma_2\coloneqq&2+4(4L_0^2+2)S_0+8\|\xstar\|^2+4\|\ystar\|^2.
\end{align*} 
If $\EE\left[1+\|x_i\|^2+\|y_i\|^2\right]\leq \Gamma_2$ for all $0\leq i\leq k-1$, then substituting $\Gamma_2$ in place of $\Gamma_1$ in Lemma \ref{lemma:z_k_and_U_k} and bound \eqref{1/k-almost-done}, we get
\begin{align*}
    &\EE\left[\|x_k-\xstar(y_k)\|^2+\|y_k-\ystar\|^2\right]\\
    &\leq 2S_0\left(\frac{K_1}{k+K_1}\right)+\frac{16\cc_1\Gamma_2}{\mu'\gamma^2}\frac{\beta}{k+K_1}+2(1+L_0^2)2\cc_1\Gamma_2\frac{\beta}{k+K_1}\\
    &\eqqcolon \frac{C_3}{k+K_1},
\end{align*}
for $C_3$ as defined in Appendix \ref{app:proof-main-1/k}. This completes the proof for the first part of Lemma \ref{lemma:final-1/k}.

We next want to show that $\EE\left[1+\|x_k\|^2+\|y_k\|^2\right]$ is bounded by $\Gamma_2$. For this, note that 
\begin{align*}
    \|x_k\|^2&\leq 2\|x_k-\xstar(y_k)\|^2+2\|\xstar(y_k)\|^2\\
    &\leq 2\|x_k-\xstar(y_k)\|^2+4\|\xstar(y_k)-\xstar\|^2+4\|\xstar\|^2\\
    &\leq 2\|x_k-\xstar(y_k)\|^2+4L_0^2\|y_k-\ystar\|^2+4\|\xstar\|^2
\end{align*}
Similarly,
\begin{align*}
    \|y_k\|^2\leq 2\|y_k-\ystar\|^2+2\|\ystar\|^2.
\end{align*}
Now,
\begin{align*}
    &\EE\left[1+\|x_k\|^2+\|y_k\|^2\right]\\
    &\leq 1+(4L_0^2+2)\EE\left[\|x_k-\xstar(y_k)\|^2+\|y_k-\ystar\|^2\right]+4\|\xstar\|^2+2\|\ystar\|^2\\
    &\leq 1+2(4L_0^2+2)S_0+4\|\xstar\|^2+2\|\ystar\|^2+(4L_0^2+2)\left(\frac{16\cc_1\Gamma_2}{\mu'\gamma^2}\beta_k+2(1+L_0^2)2\cc_1\Gamma_2\beta_k\right).
\end{align*}
For the second inequality here, we use the fact that $K_1/(k+K_1)\leq 1$ for all $k$. For sufficiently large $K_1$, $$(4L_0^2+2)\left(\frac{16\cc_1}{\mu'\gamma^2}+2(1+L_0^2)2\cc_1\right)\beta_k\leq 0.5.$$ 
This implies that
\begin{align*}
    \EE\left[1+\|x_k\|^2+\|y_k\|^2\right]&\leq 1+2(4L_0^2+2)S_0+0.5\Gamma_2+4\|\xstar\|^2+2\|\ystar\|^2\\
    &\leq 2+4(4L_0^2+2)S_0+8\|\xstar\|^2+4\|\ystar\|^2\\
    &=\Gamma_2.
\end{align*}
This completes the proof for Lemma \ref{lemma:final-1/k}.

\subsection{Proof for Lemma \ref{lemma:recursive-T}} 
\subsubsection{Recursive Bound on $T_{m+1}$} Under the assumption, we first note that $\EE\left[\|U_m\|^2\right]\leq 2\cc_1\Gamma_1\beta_m$ and $\EE\left[\|M_{m+1}\|^2\right]\leq \cc_1\Gamma_1$. Substituting these bounds in Lemma \ref{lemma:recursive} gives us
\begin{align*}
    \EE\left[\|x_{m+1}-\xstar(z_{m+1})\|^2\right]&\leq (1-\lambda'\alpha_m)\EE\left[\|x_m-\xstar(z_m)\|^2\right]+\frac{12L^2\cc_1\Gamma_1}{\lambda'}\alpha_m\beta_m\\
    &\;\;+\frac{\mu'\beta_m}{2}\EE\left[\|z_m-\ystar\|^2\right]+3\cc_1\Gamma_1\alpha_m^2.
\end{align*}
We first note that $\beta_{m+1}/\alpha_{m+1}\leq \beta_m/\alpha_m$. Furthermore for sufficiently large $K_2$, we have $\frac{18L^2}{\lambda'\mu'}\frac{\beta_m}{\alpha_m}\leq 1$ for all $m$. Then,
\begin{align*}
    \EE\left[\frac{18L^2}{\lambda'\mu'}\frac{\beta_{m+1}}{\alpha_{m+1}}\|x_{m+1}-\xstar(z_{m+1})\|^2\right]&\leq\EE\left[\frac{18L^2}{\lambda'\mu'}\frac{\beta_m}{\alpha_m}\|x_{m+1}-\xstar(z_{m+1})\|^2\right]\\
    &\leq \frac{18L^2}{\lambda'\mu'}\frac{\beta_m}{\alpha_m}(1-\lambda'\alpha_m)\EE\left[\|x_m-\xstar(z_m)\|^2\right]\\
    &\;\;+\frac{\mu'\beta_m}{2}\EE\left[\|z_m-\ystar\|^2\right]+\frac{66L^2\cc_1\Gamma_1}{\lambda'\mu'}\alpha_m\beta_m.
\end{align*}
For the last term here, we use the fact that 
\begin{align*}
    \frac{18L^2}{\lambda'\mu'}\frac{\beta_m}{\alpha_m}\frac{12L^2\cc_1\Gamma_1}{\lambda'}\alpha_m\beta_m&\leq \frac{12L^2\cc_1\Gamma_1}{\lambda'}\alpha_m\beta_m\leq \frac{12L^2\cc_1\Gamma_1}{\lambda'\mu'}\alpha_m\beta_m
\end{align*}
Adding the above scaled bound with the bound for $\EE\left[\|z_{m+1}-\ystar\|^2\right]$ gives us
\begin{align*}
    T_{m+1}&\leq \frac{18L^2}{\lambda'\mu'}\frac{\beta_m}{\alpha_m}\left(1-\frac{\lambda'\alpha_m}{2}\right)\EE\left[\|x_m-\xstar(z_m)\|^2\right]\\
    &\;\;+(1-\mu'\beta_m)\EE\left[\|z_m-\ystar\|^2\right]+\frac{18L^2\cc_1\Gamma_1}{\mu'}\beta_m^2+\frac{66L^2\cc_1\Gamma_1}{\lambda'\mu'}\alpha_m\beta_m. 
\end{align*}
For sufficiently large $K_2$, we have
\begin{itemize}
    \item $1-\lambda'\alpha_m/2\leq 1-\mu'\beta_m$
    \item $18\beta_m\leq 4\alpha_m/\lambda'$.
\end{itemize}
This gives us the following recursive bound.
\begin{align*}
    T_{m+1}\leq (1-\mu'\beta_m)T_m+\frac{70L^2\cc_1\Gamma_1}{\lambda'\mu'}\alpha_m\beta_m.
\end{align*}

\subsubsection{Bound on $T_{m}$}
For all $m\leq k$, note that 
\begin{align*}
    T_m&\leq T_0\prod_{j=0}^{m-1}(1-\mu'\beta_j)+\frac{70L^2\cc_1\Gamma_1}{\lambda'\mu'}\sum_{i=0}^{m-1}\alpha_i\beta_i\prod_{j=i+1}^{m-1}(1-\mu'\beta_j).
\end{align*}
Note that $T_0\leq S_0$ and the first term is bounded in the same way as the proof for Lemma \ref{lemma:recursive-S}. For the second term, we use Lemma \ref{lemma:aux} with $\epsilon=\alpha\beta, \pfrak=\mu',\qfrak=1+\afrak$ and $K=K_2$. For $\beta\geq 2\afrak/\mu'$, $$\sum_{i=0}^{m-1}\alpha_i\beta_i\prod_{j=i+1}^{m-1}(1-\mu'\beta_j)\leq \frac{2}{\mu'}\alpha_m.$$
This gives us the bound on $T_m$. Note that $\EE\left[\|z_m-\ystar\|^2\right]\leq T_m$. This completes the proof for Lemma \ref{lemma:recursive-T}.

\subsection{Proof for Lemma \ref{lemma:bound-x-1/k^a}}
Using the bound from Lemma \ref{lemma:recursive} and substituting the bound on $\EE\left[\|z_m-\ystar\|^2\right]$ from Lemma \ref{lemma:recursive-T}, we get for $m\leq k-1$
\begin{align*}
    \EE\left[\|x_{m+1}-\xstar(z_{m+1})\|^2\right]&\leq (1-\lambda'\alpha_m)\EE\left[\|x_m-\xstar(z_m)\|^2\right]+\frac{12L^2\cc_1\Gamma_1}{\lambda'}\alpha_m\beta_m\\
    &\;\;+\frac{\mu'\beta_m}{2}\left(S_0\left(\frac{K_2}{m+K_2}\right)+\frac{140L^2\cc_1\Gamma_1}{\lambda'\mu'^2}\alpha_m\right)+3\cc_1\Gamma_1\alpha_m^2.
\end{align*}
Here we have also used the bounds on $\EE[\|U_m\|^2]$ and $\EE[\|M_{m+1}\|^2]$ which follow from our assumption that $\EE[1+\|x_i\|^2+\|y_i\|^2]\leq \Gamma_1$ for all $i\leq k-1$. Now,
$$\frac{\mu'\beta K_2}{2(m+K_2)^2}\leq \frac{\alpha^2}{(m+K_2)^{2\afrak}},$$
if $K_2^{1-\afrak}\geq \frac{\mu'\beta}{2\alpha^2}$. For sufficiently large $K_2$, we also have
$$\frac{12L^2}{\lambda'}\beta_m+\frac{70L^2}{\lambda'\mu'}\beta_m\leq \alpha_m.$$
This gives us the bound
\begin{align*}
    &\EE\left[\|x_{m+1}-\xstar(z_{m+1})\|^2\right]\leq (1-\lambda'\alpha_m)\EE\left[\|x_m-\xstar(z_m)\|^2\right]+(S_0+4\cc_1\Gamma_1)\alpha_m^2.
\end{align*}
Iterating the recursion from $m=0$ to $k-1$ gives us
\begin{align*}
    \EE\left[\|x_k-\xstar(z_k)\|^2\right]&\leq \EE\left[\|x_0-\xstar(y_0)\|^2\right]\prod_{j=0}^{k-1}(1-\lambda'\alpha_j)\\
    &+(S_0+4\cc_1\Gamma_1)\sum_{i=0}^{k-1}\alpha_i^2\prod_{j=i+1}^{m-1}(1-\lambda'\alpha_j).
\end{align*}
For the first term, we first note that $\EE\left[\|x_0-\xstar(y_0)\|^2\right]\leq S_0$. Next, for appropriate $D_3$, $1-\lambda'\alpha_j\leq 1-\mu'\beta_j$. Hence, 
$$\EE\left[\|x_0-\xstar(y_0)\|^2\right]\prod_{j=0}^{k-1}(1-\lambda'\alpha_j)\leq S_0\frac{K_2}{k+K_2}.$$
We use Corollary 2.1.2 from \cite{Zaiwei} for the second term to get
$$\sum_{i=0}^{k-1}\alpha_i^2\prod_{j=i+1}^{m-1}(1-\lambda'\alpha_j)\leq \frac{2}{\lambda'}\alpha_k,$$
for $K_2\geq \left(\frac{2\afrak}{\lambda'\alpha}\right)^{1/1-\afrak}$. This gives us the following bound.
\begin{align*}
    &\EE\left[\|x_k-\xstar(z_k)\|^2\right]\leq S_0\left(\frac{K_2}{k+K_2}+\frac{2}{\lambda'}\alpha_k\right)+\frac{8\cc_1\Gamma_1}{\lambda'}\alpha_k.
\end{align*}

\subsection{Proof for Lemma \ref{lemma:final-1/k^a}}
We first define 
$$\Gamma_3\coloneqq2+12(4L_0^2+2)S_0+8\|\xstar\|^2+4\|\ystar\|^2.$$
Substituting $\Gamma_3$ in place of $\Gamma_1$ in Lemma \ref{lemma:z_k_and_U_k} and the bounds from Lemma \ref{lemma:recursive-T} and Lemma \ref{lemma:bound-x-1/k^a}, we get
\begin{align*}
    &\EE\left[\|x_k-\xstar(y_k)\|^2+\|y_k-\ystar\|^2\right] \\
    &\leq 2\EE\left[\|x_k-\xstar(z_k)\|^2\!+\!\|z_k-\ystar\|^2\right]+2(1+L_0^2)\EE\left[\|U_k\|^2\right]\\
    &\leq 2S_0\left(\frac{2K_2}{k+K_2}+\frac{2}{\lambda'}\alpha_k\right)+\left(\frac{280L^2\cc_1}{\lambda'\mu'^2}+\frac{16\cc_1}{\lambda'}\right)\Gamma_3\alpha_k+2(1+L_0^2)2\cc_1\Gamma_3\beta_k\\
    &\leq 2S_0\left(\frac{2K_2}{(k+K_2)^{\afrak}}+\frac{2}{\lambda'}\alpha_k\right)+\left(\frac{280L^2\cc_1}{\lambda'\mu'^2}+\frac{16\cc_1}{\lambda'}\right)\Gamma_3\alpha_k+2(1+L_0^2)2\cc_1\Gamma_3\alpha_k
\end{align*}
The last inequality here follows from the fact that $1/(k+K_2)\leq 1/(k+K_2)^\afrak$ and $\beta_k\leq \alpha_k$. This gives us
$$\EE\left[\|x_k-\xstar(y_k)\|^2+\|y_k-\ystar\|^2\right]\leq \frac{D_2}{(k+K_2)^\afrak},$$
where $D_2$ is defined in Appendix \ref{app:proof-main-1/k^a}. This completes the proof for the first part of Lemma \ref{lemma:final-1/k^a}. 

Similar to Lemma \ref{lemma:final-1/k}, we note that 
\begin{align*}
    &\EE\left[1+\|x_k\|^2+\|y_k\|^2\right]\\
    &\leq 1+(4L_0^2+2)\EE\left[\|x_k-\xstar(y_k)\|^2+\|y_k-\ystar\|^2\right]+4\|\xstar\|^2+2\|\ystar\|^2\\
    &\leq 1+(4L_0^2+2)\times 6S_0+4\|\xstar\|^2+2\|\ystar\|^2+(4L_0^2+2)\left(\frac{280L^2\cc_1}{\lambda'\mu'^2}+\frac{16\cc_1}{\lambda'}+2(1+L_0^2)2\cc_1\right)\Gamma_3\alpha_k.
\end{align*}
For the second inequality here, we use the fact that $K_2/(k+K_2)\leq 1$ and the assumption that $2\alpha_k/\lambda'\leq 1$. Now for sufficiently large $K_2$,
$$(4L_0^2+2)\left(\frac{280L^2\cc_1}{\lambda'\mu'^2}+\frac{16\cc_1}{\lambda'}+2(1+L_0^2)2\cc_1\right)\alpha_k\leq 0.5,$$
for all $k$. Then, 
\begin{align*}
    \EE\left[1+\|x_k\|^2+\|y_k\|^2\right]&\leq 1+6(4L_0^2+2)S_0+4\|\xstar\|^2+2\|\ystar\|^2+0.5\Gamma_3\\
    &\leq 2+12(4L_0^2+2)S_0+8\|\xstar\|^2+8\|\ystar\|^2\\
    &=\Gamma_3.
\end{align*}
This completes the proof for Lemma \ref{lemma:final-1/k^a}.

\section{Proofs for Theorem \ref{thm:main-1/k} and Theorem \ref{thm:main-1/k^a}}\label{app:proof-main}
\subsection{Proof for Theorem \ref{thm:main-1/k}}\label{app:proof-main-1/k}
Lemma \ref{lemma:final-1/k} shows that if $\EE\left[1+\|x_i\|^2+\|y_i\|^2\right]\leq \Gamma_2$ for all $i\leq k-1$, then $\EE\left[1+\|x_k\|^2+\|y_k\|^2\right]\leq \Gamma_2$. For the base case of $k=0$, note that
\begin{align*}
    \EE\left[1+\|x_0\|^2+\|y_0\|^2\right]&\leq 1+(4L_0^2+2)S_0+4\|\xstar\|^2+2\|\ystar\|^2\\
    &\leq \Gamma_2.
\end{align*}
Hence using the law of strong induction, $\EE\left[1+\|x_k\|^2+\|y_k\|^2\right]\leq \Gamma_2$ for all $k\geq 0$. This implies that 
$$\EE\left[\|x_k-\xstar(y_k)\|^2+\|y_k-\ystar\|^2\right]\leq \frac{C_3}{k+K_1},$$
for all $k$. Now,
\begin{align*}
    \EE\left[\|x_k-\xstar\|^2\right]&\leq 2\EE\left[\|x_k-\xstar(y_k)\|^2\right]+2L_0^2\EE\left[\|y_k-\ystar\|^2\right]\\
    &\leq 2(L_0^2+1)\frac{C_3}{k+K_1}\\
    &\eqqcolon \frac{C_4}{k+K_1}.
\end{align*}

\subsubsection{Values of Constants in Assumption \ref{assu:stepsize-1/k} and Theorem \ref{thm:main-1/k}}
We assume $\beta/\alpha\leq C_1$, where 
$$C_1=\frac{\lambda'\mu'}{8LL_0+64L_0^2+4+14L^2},$$
and $K_1\geq C_2$, where 
\begin{align*}
    &C_2=4\alpha+36L_0^2\beta\left(\frac{L^2}{\lambda'}+\frac{4}{\mu'}+1\right)+16\cc_1\beta(2L_0^2+1)\left(\frac{4}{\mu'\gamma^2}+1+L_0^2\right).
\end{align*}
The constants $C_3$ and $C_4$ in the bound are
$$C_3=2S_0K_1+\frac{16\cc_1\Gamma_2\beta}{\mu'\gamma^2}+4\cc_1\Gamma_2\beta(1+L_0^2),$$
$$C_4=2(L_0^2+1)C_3.$$
Here $\Gamma_2=2+4(4L_0^2+2)S_0+8\|\xstar\|^2+4\|\ystar\|^2$.

\subsection{Proof for Theorem \ref{thm:main-1/k^a}}\label{app:proof-main-1/k^a}
Lemma \ref{lemma:final-1/k^a} shows that if $\EE\left[1+\|x_i\|^2+\|y_i\|^2\right]\leq \Gamma_3$ for all $i\leq k-1$, then $\EE\left[1+\|x_k\|^2+\|y_k\|^2\right]\leq \Gamma_3$. For the base case of $k=0$, note that
\begin{align*}
    \EE\left[1+\|x_0\|^2+\|y_0\|^2\right]&\leq 1+(4L_0^2+2)S_0+4\|\xstar\|^2+2\|\ystar\|^2\\
    &\leq \Gamma_3.
\end{align*}
Hence using the law of strong induction, $\EE\left[1+\|x_k\|^2+\|y_k\|^2\right]\leq \Gamma_3$ for all $k\geq 0$. This implies that 
$$\EE\left[\|x_k-\xstar(y_k)\|^2+\|y_k-\ystar\|^2\right]\leq \frac{D_2}{(k+K_1)^\afrak},$$
for all $k$. Now,
\begin{align*}
    \EE\left[\|x_k-\xstar\|^2\right]&\leq 2\EE\left[\|x_k-\xstar(y_k)\|^2\right]+2L_0^2\EE\left[\|y_k-\ystar\|^2\right]\\
    &\leq 2(L_0^2+1)\frac{D_2}{(k+K_1)^\afrak}\\
    &\eqqcolon \frac{D_3}{(k+K_1)^\afrak}.
\end{align*}
\subsubsection{Values of Constants in Assumption \ref{assu:stepsize-1/k^a} and Theorem \ref{thm:main-1/k^a}}
We assume $K_2\geq D_1$, where 
\begin{align*}
    D_1^{1-\afrak}&=16\cc_1(2L_0^2+1)\left(\frac{70L^2}{\lambda'\mu'^2}+\frac{4}{\lambda'}+1+L_0^2\right)+\frac{4\alpha}{\lambda'}\\ 
    &\;\;+\frac{8LL_0+64L_0^2+5+82L^2}{\lambda'\mu'}\frac{\beta}{\alpha}+\frac{\beta}{\alpha^2}+36L_0^2\beta\left(\frac{L^2}{\lambda'}+\frac{4}{\mu'}+1\right).
\end{align*}
Note that our aim here is to give just one lower bound for $K_2$ and hence we `loosely' combine multiple terms to get an expression for $D_1$. This bound can be significantly tightened by explicitly stating different terms without combining them. 

The constants $D_2$ and $D_3$ in the bound are
$$D_2=4S_0\left(K_2+\frac{4\alpha}{\lambda'}\right)+\left(\frac{280L^2}{\lambda'\mu'^2}+\frac{16}{\lambda'}+4(1+L_0^2)\right)\cc_1\Gamma_3\alpha,$$
$$D_3=2(L_0^2+1)D_2.$$
Here $\Gamma_3=2+12(4L_0^2+2)S_0+8\|\xstar\|^2+4\|\ystar\|^2$.

\section{Auxiliary Lemmas}
\begin{lemma}\label{lemma:aux}
    Suppose $\beta_k=\beta/(k+K)$. Let $\epsilon_k=\epsilon/(k+K)^{\qfrak}$, where $\qfrak\in (1,2]$. If $\pfrak\geq 0$, $\beta\geq \frac{2(\qfrak-1)}{\pfrak}$ and $\pfrak\beta_k\leq 1$, then 
    $$\sum_{i=0}^{k-1}\epsilon_i\prod_{j=i+1}^{k-1}(1-\beta_j\pfrak)\leq \frac{2}{\pfrak}\frac{\epsilon_k}{\beta_k}.$$
\end{lemma}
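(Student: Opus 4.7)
\medskip
\noindent\textbf{Proof proposal.} The plan is to recognize the sum on the left-hand side as the solution of a one-step linear recursion and prove the bound by induction on $k$. Writing $S_k\coloneqq\sum_{i=0}^{k-1}\epsilon_i\prod_{j=i+1}^{k-1}(1-\pfrak\beta_j)$ and peeling off the last term of the product, one obtains $S_{k+1}=(1-\pfrak\beta_k)S_k+\epsilon_k$ with $S_0=0$. Let $T_k\coloneqq\frac{2}{\pfrak}\frac{\epsilon_k}{\beta_k}=\frac{2\epsilon}{\pfrak\beta}(k+K)^{1-\qfrak}$; the target is to show $S_k\leq T_k$ for all $k\geq 0$. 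The base case $k=0$ is trivial since $S_0=0\leq T_0$.

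For the inductive step, assume $S_k\leq T_k$. Using the hypothesis $\pfrak\beta_k\leq 1$, the factor $(1-\pfrak\beta_k)$ is non-negative so the induction hypothesis carries through the recursion, and it suffices to prove the purely algebraic inequality $(1-\pfrak\beta_k)T_k+\epsilon_k\leq T_{k+1}$. Substituting $\beta_k=\beta/(k+K)$ and $\epsilon_k=\epsilon/(k+K)^{\qfrak}$, clearing the common factor $\epsilon/(\pfrak\beta)\cdot(k+K)^{1-\qfrak}$, and simplifying reduces this to
\[
2-\frac{\pfrak\beta}{k+K}\;\leq\;2\left(\frac{k+K+1}{k+K}\right)^{1-\qfrak}.
\]
Setting $u=1/(k+K)>0$, this is exactly $2-\pfrak\beta u\leq 2(1+u)^{1-\qfrak}$.

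The final step is to close this with Bernoulli's inequality. Since $\qfrak\in(1,2]$ gives the exponent $1-\qfrak\in[-1,0)$, Bernoulli applies in the form $(1+u)^{1-\qfrak}\geq 1+(1-\qfrak)u$ for $u>0$. Plugging this in, the desired inequality will hold provided $2-\pfrak\beta u\leq 2+2(1-\qfrak)u$, which rearranges to $\pfrak\beta\geq 2(\qfrak-1)$, i.e.\ $\beta\geq 2(\qfrak-1)/\pfrak$ — precisely the standing hypothesis. This closes the induction.

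The only delicate point is making sure Bernoulli's inequality is applied in the valid regime (exponent outside $(0,1)$, base positive), which is guaranteed here by $\qfrak\in(1,2]$ and $u>0$; the step-size smallness condition $\pfrak\beta_k\leq 1$ is needed only to keep the multiplier $(1-\pfrak\beta_k)$ non-negative so that the induction hypothesis propagates without sign flip. Everything else is a routine manipulation of the recursion.
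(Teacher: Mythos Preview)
Your proof is correct and follows essentially the same approach as the paper: both recognize the sum as the solution of the linear recursion $S_{k+1}=(1-\pfrak\beta_k)S_k+\epsilon_k$ and verify the bound $S_k\le \frac{2}{\pfrak}\frac{\epsilon_k}{\beta_k}$ by induction, reducing to the same elementary inequality on $(1+u)^{1-\qfrak}$. The only cosmetic difference is that you close this with Bernoulli's inequality directly, whereas the paper routes through $(1+1/x)^x\le e$ and $e^x\ge 1+x$ to reach the identical lower bound $1-(\qfrak-1)u$.
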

\begin{proof}
    Define sequence $s_{0}=0$ and $s_{k+1}=(1-\beta_k\pfrak)s_k+\epsilon_k$. Note that $s_k=\sum_{i=0}^{k-1}\epsilon_i\prod_{j=i+1}^{k-1}(1-\beta_j\pfrak)$. We will use induction to show our required result. Suppose that $s_k\leq (2/\pfrak)(\epsilon_k/\beta_k)$ holds for some $k$. Then,
    \begin{align*}
        \frac{2}{\pfrak}\frac{\epsilon_{k+1}}{\beta_{k+1}}-s_{k+1}&=\frac{2}{\pfrak}\frac{\epsilon_{k+1}}{\beta_{k+1}}-(1-\pfrak\beta_k)s_k-\epsilon_k\\
        &\geq \frac{2}{\pfrak}\frac{\epsilon_{k+1}}{\beta_{k+1}}-(1-\pfrak\beta_k)\frac{2}{\pfrak}\frac{\epsilon_k}{\beta_k}-\epsilon_k\\
        &=\frac{2}{\pfrak}\left(\frac{\epsilon_{k+1}}{\beta_{k+1}}-\frac{\epsilon_k}{\beta_k}\right)+\epsilon_k.
    \end{align*}
    Here the inequality follows from our assumption that the required inequality holds at time $k$.
    Now,
    $$\left(\frac{\epsilon_{k+1}}{\beta_{k+1}}-\frac{\epsilon_k}{\beta_k}\right)=\frac{\epsilon}{\beta}\left(\frac{1}{(k+K+1)^{\qfrak-1}}-\frac{1}{(k+K)^{\qfrak-1}}\right).$$
    For $\qfrak-1\in(0,1]$,
    \begin{align*}
        \frac{1}{(k+K+1)^{\qfrak-1}}-\frac{1}{(k+K)^{\qfrak-1}}&=\frac{1}{(k+K)^{\qfrak-1}}\left(\left[\left(1+\frac{1}{k+K}\right)^{k+K}\right]^{-\frac{\qfrak-1}{k+K}}-1\right)\\
        &\geq \frac{1}{(k+K)^{\qfrak-1}}\left(e^{-\frac{\qfrak-1}{k+K}}-1\right)\\
        &\geq -\frac{1}{(k+K)^{\qfrak-1}}\frac{\qfrak-1}{k+K}=-\frac{\qfrak-1}{\epsilon}\epsilon_k.
    \end{align*}
    Here the first inequality follows from the inequality $(1+1/x)^x\leq e$ and $e^x\geq 1+x$ for all $x$. This implies 
    \begin{align*}
        \frac{2}{\pfrak}\frac{\epsilon_{k+1}}{\beta_{k+1}}-s_{k+1}&\geq -\frac{2}{\pfrak}\frac{\epsilon}{\beta}\frac{\qfrak-1}{\epsilon}\epsilon_k+\epsilon_k\\
        &=\epsilon_k\left(1-\frac{2(\qfrak-1)}{\pfrak\beta}\right).
    \end{align*}
    Since we have the assumption that $\beta\geq \frac{2(\qfrak-1)}{\pfrak}$, $s_{k+1}\leq \frac{2}{\pfrak}\frac{\epsilon_{k+1}}{\beta_{k+1}}$. This completes the proof by induction.
\end{proof}


\end{document}